\pgfplotsset{compat=1.12}
\newenvironment{sbmatrix}[1]
 {\def\mysubscript{#1}\mathop\bgroup\begin{bmatrix}}
 {\end{bmatrix}\egroup_{\textstyle\mathstrut\mysubscript}}
\newcommand{\eqtext}[1]{\ensuremath{\stackrel{\text{#1}}{=}}}
\newcommand{\geqtext}[1]{\ensuremath{\stackrel{\text{#1}}{\geq}}}
\def\BState{\State\hskip-\ALG@thistlm}
\pgfplotsset{
	/pgfplots/area cycle list/.style={/pgfplots/cycle list={%
			{black,fill=yellow!20!white,mark=none},%
			{black,fill=yellow!40!white,mark=none},%
			{black,fill=red!20!white,mark=none},%
			{black,fill=red!40!white,mark=none},
			{black,fill=red, mark = none}
		}
	},
}
\def\@biblabel#1{\hspace*{-\labelsep}}
\DeclareMathAlphabet      {\mathbfit}{OML}{cmm}{b}{it}
\def\BState{\State\hskip-\ALG@thistlm}
\newtheorem{theorem}{Theorem}
\newtheorem{remark}{Remark}
\newtheorem{prop}{Proposition}
\newtheorem{lemma}{Lemma}
\newtheoremstyle{case}{}{}{}{}{}{:}{ }{}
\theoremstyle{case}
\DeclareMathOperator*{\argmin}{arg\,min}
\title{ Fused-Lasso Regularized  Cholesky Factors of  Large Nonstationary Covariance Matrices of Longitudinal Data}
\author[$\dagger$]{Aramayis Dallakyan \thanks{{Correpondence to: Aramayis Dallakyan, 3143 TAMU, Department of Statistics, College Station, TX 77843, USA. \\
E-mail: dallakyan1988@tamu.edu}}}
\author[$\dagger$]{Mohsen Pourahmadi}
\affil[$\dagger$]{Department of Statistics, Texas A\&M University, College Station, TX 77843, USA}
\date{}
\begin{document}
	
\maketitle
\sloppy
\singlespacing

\begin{center}
\textbf{Abstract}
\end{center}
\noindent Smoothness of the  subdiagonals of the Cholesky factor of large  covariance matrices is closely related to the degrees
 of nonstationarity of autoregressive models for time series and longitudinal data. Heuristically, one expects for a nearly stationary covariance matrix the entries in each subdiagonal of the Cholesky factor of its inverse to be nearly the same in the sense that  sum of  absolute values of successive terms is small.
Statistically such smoothness is achieved by regularizing each subdiagonal using  fused-type lasso penalties. We rely on the standard Cholesky factor as the new parameters within a regularized normal likelihood setup which guarantees:
(1) joint convexity of the likelihood function,
(2) strict convexity of the likelihood function restricted to each subdiagonal even when $n<p$, and (3) positive-definiteness of the estimated covariance matrix.
 A block coordinate descent algorithm, where each block is a subdiagonal, is proposed and its convergence is established under mild conditions. Lack of decoupling of the penalized likelihood function into a sum of functions involving individual subdiagonals gives rise to some computational challenges and  advantages relative to two recent algorithms for sparse estimation of the Cholesky factor which decouple row-wise. Simulation results and real data analysis show the scope and good performance of the proposed methodology. 

\noindent

\vspace{0.5 cm}
\strut

\noindent\textbf{Keywords: Nonstationary covariance matrices, Gaussian graphical models, Cholesky factor, fused-lasso, precision matrices }

\vspace{0.25 cm}
\noindent\textbf{MOS subject classifications:} 62A09, 60G99  
\strut
\vspace{0.5 cm}
\thispagestyle{empty}

\pagebreak
\doublespacing
\setlength{\parskip}{.85mm plus.25mm minus.25mm}


\section{Introduction} \label{s:s1}

A  salient feature of stationary time series analysis is its reliance on  the Cholesky decomposition
 to model temporal dependence and the dynamics. Important examples include moving average models (Cholesky decomposition of a covariance matrix), autoregressive (AR) models (Cholesky decomposition of an inverse covariance matrix), ARMA models
 in the time-domain \citep{ansley1979}, see  \cite{dai2004,rosen2007} for explicit use of the Cholesky factors in the spectral-domain. For nonstationary time series  the focus has been  on (time-)varying coefficients AR models \citep{gabriel1962, rao1970, kitagawa1985, dahlhaus1997,zimmerman2010}.

Recently, a similar dichotomy is taking roots in the modern multivariate statistics and machine
learning  where the focus is  on either estimation of  large covariance or inverse
covariance matrices  of longitudinal data using Cholesky decomposition. Whereas the entries of a covariance matrix quantifies pairwise or marginal dependence, those of
the precision or inverse covariance matrix specifies multivariate relationships among the variables
in a $p$-dimensional random vector $X =(X_1,\dots,X_p)^t \in R^p$
with a positive-definite covariance matrix $\Sigma$. More precisely, when $X$ follows a Gaussian distribution a zero off-diagonal entry of $\Omega = (\Omega_{j,k})=\Sigma^{-1}$
or $\Omega_{j,k}=0$
implies that $X_j$ and $X_k$ are conditionally independent given all other variables \citep{whittaker1990}.
When the number of observations $n$ is less than
the number of variables $p$, it is reasonable to impose structure or regularize
$\Omega$ directly in the search for sparsity \citep{Banerjee2007,Friedman2008}, see \citet{pourahmadi2013} for an overview.

The use of the modified Cholesky decomposition of $\Omega$ was advocated in \citep{pourahmadi1999,Wu2003},
\citet{Huang2006} and \citet{levina2008}  for parsimony (GLM-based) and sparse (regularized) estimation  of its Cholesky factor and hence the precision matrix. Recall that the standard and modified Cholesky factors of a positive-definite precision matrix are defined and connected by
\begin{equation}\label{eq:e1}
\Omega =L^tL= T^t\Lambda^{-1}T, \; L= \Lambda^{-1/2}T,
\end{equation}
where $L=(L_{i,j})$ is a unique lower triangular matrix with positive diagonal entries  and
 $T=(\phi_{i,j})$ is a unit lower triangular matrix with diagonal entries equal to 1, $\Lambda=\mbox{diag}(\sigma^2_1,\dots,\sigma^2_p)$ is a diagonal
matrix with positive diagonal entries. For time series and longitudinal data  the entries in each row of $T$ have the useful   interpretation
as the regression coefficients and each diagonal entry of $\Lambda$ as the  variance of the residual $\varepsilon_t$ of regressing a
variable on its preceding variables:
\begin{equation}\label{AR}
X_t=\sum_{j=1}^{t-1}\phi_{tj}X_{t-j}+\varepsilon_t,\;\; t=1,2,\dots, p, \ \ \phi_{11}=0.
\end{equation}
The genesis of this representation and interpretation of the coefficients for stationary processes can be traced to the rise of finite-parameter AR models in 1920's
 \citep[Section 1.2]{pourahmadi2001}; \citep{ansley1979}. For example, a stationary AR model of order $p$ is closely related to a $p$-banded lower triangular matrix where all entries of its first subdiagonal are the same and equal to the negative of the lag-1 AR coefficient, and so on.
Heuristically, one expects for a nearly
stationary (Toeplitz) covariance matrix the entries in each subdiagonal of the Cholesky factor
of the inverse covariance matrix to be nearly the same in the sense that sum of absolute values of its successive
terms is small. Important examples of  mild
departures from stationarity are  locally stationary  \citep{dahlhaus1997}
and  piecewise stationary \citep{adak1998,davis2006} processes where in the latter  the  subdiagonals could be certain step functions. Figure~\ref{fig:cplot} illustrates the adverse effect of learning a genuinely nonstationary covariance matrix of the cattle  data \citep{kenward1987} using a  (misspecified) stationary AR model.

 We emphasize the time-varying nature  of the  coefficients
  $\phi_{tj}$ in (\ref{AR}) for fixed $j$  by  using a doubly indexed triangular array $X_{t,p}$ \citep{dahlhaus1997}  and writing it more generally as
 \begin{equation}\label{ARS}
X_{t,p}=\sum_{j=1}^{p_t}\alpha_{j}(\frac tp)X_{t-j,p}+ \sigma(\frac tp)\varepsilon_t,\;\; t=1,\dots, p,
\end{equation}
where $ 0\le p_t\le p$, $\alpha_{j}(u$) and $\sigma(u)$ are smooth functions of the rescaled time  $u=\frac tp \in [0, 1]$ and $\varepsilon$'s are
 i.i.d. random variables with mean zero and variance one. This  rescaling enables one to  view the (sub)diagonals of  $T$ and $\Lambda$ as realizations of smooth functions  (see Figure~\ref{fig:tvar}) and brings the estimation problem within the familiar nonparametric  infill asymptotic setup where one observes the smooth functions $\alpha_{j}(u$) and $\sigma(u)$ on a finer grids for a larger $p$.
  Interestingly, choosing
 $\alpha_{j}(u$) and $\sigma(u)$ as functions of bounded variation guarantees that, under mild conditions, the solutions of (\ref{ARS}) are locally stationary processes \citep[Proposition 2.4]{dahlhaus2009}.

 In addition to its profound conceptual impact on time series analysis  \citep{dahlhaus2012},
  the functional view of (\ref{ARS}) for longitudinal data has been a major  source of inspiration    for  nonparametric estimation of the subdiagonals of $T$, see \citet{Wu2003} and  \citet{huang2007}. Furthermore, within the
  smoothing spline ANOVA
framework,  \citet{blake2018} treats the AR coefficients $\phi_{tj}, t>j$ as a bivariate smooth function and decomposes it in the stationary direction of the lag $\ell =t-j$ and the nonstationary (additive) direction $m=\frac {t+j}2$ and a possible interaction term. Then, regularizing the nonstationary  direction more heavily amounts to shrinking the covariance estimator toward the more parsimonious and desirable stationary structures.

 In the longitudinal data setup, with a sample $X_1,\cdots, X_n \sim N_p(0,\Sigma)$ and   the sample covariance matrix $S=n^{-1}\sum_{i=1}^nX_iX_i'$, its log-likelihood function  $\ell(\Omega)=\mbox{tr}(\Omega S)-\log|\Omega|$ was used for penalized likelihood
 estimation of the parameters $(T,\Lambda)$ in \citet{Huang2006}, see also
   \citet{levina2008} and \citet{khare2016} for a comprehensive review. The lack of convexity of the likelihood in $(T, \Lambda)$ was noted first in  \citet{khare2016} and \citet{yu2017}. They ensure  convexity by reparameterizing the likelihood in terms of the standard Cholesky factor $L$ rather than the customary $(T,\Lambda)$-parametrization. While the last identity in (\ref{eq:e1}) reveals that
    $T$ and $L$  share  the same sparsity patterns, the connection between the degree of smoothness of their subdiagonals is a bit more complicated and controlled by the boundedness and smoothness of  the diagonal entries of $\Lambda$ (see Lemma~\ref{l:TL}).

   This paper is concerned with  smoothness through regularizing the subdiagonals of the Cholesky factor $L$ of  $\Omega$ using the
 fused Lasso penalties \citep{Tibshirani2005} as an alternative to their smooth (nonparametric) estimation.
More specifically, using  the family of fused lasso penalty functions on the subdiagonals we propose a novel \textit{smooth Cholesky (SC) algorithm} for estimating the subdiagonals of $L$ and hence the (inverse) covariance matrix via a block coordinate decent algorithm. The SC objective function is convex in $L$, and compared to the recent algorithms in \citet{khare2016} and \citet{yu2017} when $n <<p$, the update
of each block is obtained by solving a strictly convex optimization problem.  We establish the convergence of the iterates to stationary points of the objective function, and elaborate on the connection between  the smoothness of  the subdiagonals of $L$  and those of  $T$ under the  assumption of boundedness of the diagonal entries of  $\Lambda$.


The remainder of the paper is organized as follows. Section~\ref{s:s2} introduces the SC algorithm and studies its
convergence and computational complexity. Section~\ref{s:TandL}  establishes the connection between smoothness of subdiagonals of $L$ and $T$.
Section~\ref{s:s3} illustrates the performance of the SC methodology through simulations and real data analysis, 
and demonstrates its ability to model and detect the
smoothness of  the subdiagonals of Cholesky factor. Consequently, estimation of the covariance matrix
and its role in forecasting the future calls in a call center are investigated. Appendices contain proofs of the result in the paper and some additional simulations. All appendices are placed in Supporting Information.


 Our SC algorithm and the corresponding methodology for longitudinal data can be specialized to the setup of  a long stretch of a single
stationary time series, namely for $n=1$ and $p$ large. To this end, banded estimates of Toeplitz covariance matrices and properties of the corresponding optimal linear predictors are studied in  \citet{wu2009, bickel2011} and \citet{politis2010,mcmurry2015}. For covariance estimation and prediction of locally stationary processes, see \citet{das2020}.

In the rest of this section, we introduce notation used throughout the paper. For a vector $x=(x_1,\dots,x_p) \in \mathcal{R}^p$, we
define its norm $\|x\|_q = (\sum_{i=1}^q |x_i|^q)^{1/q}$ for $q \geq 1$. We
denote by $\mathcal{L}_p$ the space of all lower triangular matrices with positive diagonal
elements. Given a $p \times p$ lower-triangular matrix $L$, the $p^2 \times 1$ vector $V= (v_i)= vec(L)$ is its standard vectorization formed by stacking up its column vectors including the zero (redundant) entries. Each vector of (sub)diagonal  entries of $L$
corresponds to those from $V$ with the following set of indices:
$$I_j = \{k(p+1)+j+1:k=0,\dots,(p-j-1)\}, j=0, 1,\ldots, p-1,$$
so that $I_0$ corresponds to the main diagonal entries,
 $L^j=V_{I_j}=(v_i)_{i \in I_j}$  is the $|I_j|$-subvector of the $j$th subdiagonal entries. We denote by $L^{-j}=(v_i)_{\{i \in I_k, k \neq j\}}$ a vector of diagonal and  subdiagonals, except for the $j$th subdiagonal. For simplicity in notation, we  replace $I_j$ by $j$ so that for a given $p^2 \times p^2$ matrix $A$ and index sets $I_j, I_k$, $A_{\cdot j}$ denotes the $p^2 \times |I_j|$ submatrix with column indices selected from $I_j$, and  $A_{jk}$ is the $|I_j|\times |I_k|$ submatrix with rows and columns of $A$ indexed by $I_j$ and $I_k$, respectively.

\section{ The  Smooth Cholesky Algorithm} \label{s:s2}

In this section, we develop the SC algorithm for a convex penalized likelihood function using fused-type Lasso  penalties on the  subdiagonals of the standard Cholesky factor. Such  penalties are bound to induce various degrees of sparsity and smoothness on the subdiagonals, but our main focus is on smoothness. The objective functions turn out to be conditionally separable. Computational and statistical properties of a block coordinate descent algorithm for its minimization are studied.

\subsection{The Gaussian-Likelihood and  Fused Lasso Penalties}
  Let $\ell(\Omega)$ be the Gaussian log-likelihood function for a sample of size $n$ from a zero-mean normal distribution with the precision matrix $\Omega$.
   Its convexity  is ensured by reparametrizing it in terms of the standard Cholesky factor $L$, see  \citet{khare2016} and \citet{yu2017}.
   More precisely, we consider
\begin{equation}\label{eq:e2}
Q(L) = tr(L^{t}LS)- 2\log |L| + \lambda P(L),
\end{equation}
where $P(L)$ is a convex penalty function. There are two recent important choices of $P(L)$ designed to induce sparsity
 in the rows of the Cholesky factor.

 The method of
Convex Sparse Cholesky Selection (CSCS) of $L$  in \citet{khare2016} employs the penalty
 $P(L)=\|L\|_1$. The ensuing objective function turns out to be
 jointly convex in the (nonredundant) entries of $L$, bounded away from $-\infty$ even if
$n < p$; but it is not strictly convex in the high-dimensional case. A cyclic coordinatewise
minimization algorithm is developed in \citet{khare2016} to compute $L$.
Note that once $L$ is computed using the CSCS or other methods considered here, then one can compute  $(T,\Lambda)$, the
(inverse) covariance matrix $\Sigma$ and $\Omega$. Sparsity of $\Omega$ is not guaranteed since
the sparsity pattern of the estimated $L$ in \citet{khare2016}, as in \citet{Huang2006} and \citet{shojaie2010},  has no particular structure.  Fortunately, a more structured sparse $L$ which guarantees sparsity of the precision matrix is developed in  \citet{yu2017}. Their
  hierarchical sparse Cholesky (HSC) method  relies on the hierarchical group penalty $P(L)=\sum_{r=2}^p\sum_{l=1}^{r-1} (\sum_{m=1}^l w_{lm}^2L_{rm}^2)^{1/2}$ where the $w_{lm}$'s
are   quadratically decaying  weights. The HSC method has the goal of learning the local dependence among the variables and leads to a more structured sparsity with a contiguous stretch of zeros in each row away from the main diagonal. Its flexibility is similar to that of the nested lasso  in \citet{rothman2010}. \citet{yu2017} relies on an alternating direction method of multipliers (ADMM) approach to compute $L$.
 Computationally, both  penalty functions  lead to a
decoupling of the above objective function into
$p$ separate and parallelizable optimization problems each involving a separate row of $L$.

For the SC algorithm developed in this paper, we employ a number of \textit{fused lasso} penalty functions on the Cholesky factor or its subdiagonals. However, unless stated otherwise the phrase  \textit{fused lasso} refers to
\[ P(L)=\sum_{i=0}^{p-1}P_{\nabla}(L^{i}),\; \  P_{\nabla}(y)=\sum_{j=2}^{p}|y_{j}-y_{j-1}|,\; y \in \mathcal{R}^{p }, \]
based on the $\ell_1$-norm of the first differences. Note that this  is slightly different  from  the more general \textit{sparse fused lasso} penalty function in \cite{Tibshirani2005} and \cite{tibshirani2011} which is of the form $$\lambda_1\sum_{j=1}^{p}|y_i|+\lambda_2P_{\nabla}(y).$$
The latter includes an additional lasso penalty term to achieve sparsity on top of smoothness of the subdiagonals. Note that our usage of  \textit{fused lasso} is more in the spirit of the total variation penalty in \citet{rudin1992}.

When   higher-order smoothness of the subdiagonals is desirable, then it is natural to  penalize sum
 of higher-order differences such as $\|D_2y\|_1$, the $\ell_1$-trend filtering \citep{kim2009}, and $\|D_2y\|^2_2$
  \citep{hodrick1997}, referred to as H-P hereafter, where
$D_2$ is the  matrix of second-order differences:
$$D_2=  \begin{bmatrix} -1 & 2 & 1 & \hdots & 0 & 0 & 0\\ 0 & -1 & 2 & \hdots & 0 & 0 & 0 \\
		\hdots \\ 0 & 0 & 0 & \hdots & -1 & 2 & 1  \end{bmatrix}.$$
 For other  higher order difference matrices belonging to the family of  generalized lasso
penalties, see \cite{Tibshirani2005};\cite{tibshirani2011}.


 \subsection{The Conditionally Separable Convex Objective Function} \label{s:ssc}
 We express  the objective function  (\ref{eq:e2})  as the sum of $p$ quadratic functions each involving  distinct
  (sub)diagonals of $L$ (given the others), so that it is conditionally separable. This is in sharp contrast to
  the objective functions in \citet{khare2016}
 and \citet{yu2017} which decouple over the rows of the matrix $L$ with nice computational consequences. Nevertheless, our
  objective function is  jointly convex in $L$, and  strictly convex when $n < p$.

   Let $B = S \otimes I_p$ be the Kronecker product of the sample covariance  matrix  from a sample of size $n$ and the identity matrix.
 The structure of the matrix $B$ and the $(p-i)\times (p-j)$  submatrices $B_{ij},\, 0 \leq i,j \leq p-1$, introduced in the proof of
 the following Lemma play a vital role in proving properties of our SC algorithm.

\begin{lemma} \label{l:l1}
For the lower triangular matrix $L$ it holds that:
\begin{enumerate}
\item[(a)] The first term in (\ref {eq:e2}) can be rewritten as

\begin{equation} \label{eq:l1}
tr(LSL^{t})=V^t(S \otimes I_p)V = \sum_{i = 0} ^ {p -1} \sum_{j = 0} ^{p - 1} L^i B_{ij} L^j
\end{equation}
 \item[(b)] The objective function $Q(L)$ is \textit{conditionally separable} in that
 \begin{equation} \label{eq:ssc}
Q(L) = \sum_{i=0}^{p-1}Q_{i}(L^i|L^{-i}),
\end{equation}
where  for $i= 0, 1,\dots,p-1$ and  fixed $L^{-i}$,
\begin{equation} \label{eq:ssc1}
Q_{i}(L^{i}|L^{{-i}}) = q_{i}(L^{i}|L^{-i}) + \lambda P_{\nabla}({L^{i}}),\; Q_{0}(L^0|L^{-0})=q_0(L^0|L^{-0}) -2\sum_{j=1}^{p} \log L^0_{j}
\end{equation}
and
\begin{equation} \label{eq:l11}
  q_{i}(L^{i}|L^{-i})=(L^{i})^t B_{ii}L^{i} + (L^{i})^t(\sum_{\substack{j \neq i}}B_{ij}L^{j}), 
\end{equation}

%
\item[(c)] $Q_i(\cdot)$'s are strictly convex in $L^i$ even when $n < p$.
\end{enumerate}
\end{lemma}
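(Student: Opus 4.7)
The plan is to prove the three parts in order, tracking the block structure of $B=S\otimes I_p$ induced by the index sets $I_0,\ldots,I_{p-1}$. For part (a), the first equality $\mathrm{tr}(LSL^t)=V^t(S\otimes I_p)V$ is the standard trace--vec identity $\mathrm{tr}(LSL^t)=\mathrm{tr}(L^tLS)=\mathrm{vec}(L)^t(S\otimes I_p)\mathrm{vec}(L)$; entrywise it amounts to the routine formula $(S\otimes I_p)_{(b-1)p+a,\,(b'-1)p+a'}=S_{bb'}\mathbb{1}\{a=a'\}$. The second equality then just partitions $V$ according to $I_0,\ldots,I_{p-1}$ (the strictly upper-triangular entries of $V$ are zero and drop out of the quadratic form) and reads off the block decomposition with blocks $B_{ij}=(S\otimes I_p)_{I_i,I_j}$ of size $(p-i)\times(p-j)$.

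For part (b), I would expand the double sum from part (a) into a diagonal piece $\sum_i (L^i)^t B_{ii} L^i$ and an off-diagonal piece $\sum_{i\neq j}(L^i)^tB_{ij}L^j$. Symmetry of $S\otimes I_p$ gives $B_{ji}=B_{ij}^t$, so for each unordered pair $\{i,j\}$ with $i\neq j$ the two mirror cross-terms $(L^i)^tB_{ij}L^j$ and $(L^j)^tB_{ji}L^i$ are equal as scalars; distributing one copy to $q_i$ and one to $q_j$ reproduces precisely the expression (\ref{eq:l11}). The fused-lasso penalty is already a sum over subdiagonals, so attaching $\lambda P_\nabla(L^i)$ to each $q_i$ rebuilds $\lambda P(L)$; finally $-2\log|L|=-2\sum_j\log L^0_j$ (as $L$ is lower triangular with diagonal $L^0$) is naturally absorbed into $Q_0$, yielding the claimed separation $Q(L)=\sum_i Q_i(L^i|L^{-i})$.

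For part (c), the key observation is that $B_{ii}$ is in fact \emph{diagonal} with strictly positive diagonal entries, from which strict convexity follows immediately. Using the bijection $m=k(p+1)+i+1\in I_i \leftrightarrow (a,b)=(k+i+1,\,k+1)$, two distinct indices in $I_i$ correspond to distinct values of $k$ and therefore to different column indices $b$ in $L$; the indicator $\mathbb{1}\{a=a'\}$ in the formula for $(S\otimes I_p)_{m,n}$ then forces every off-diagonal entry of $B_{ii}$ to vanish. The diagonal entries reduce to $S_{k+1,k+1}$, $k=0,\ldots,p-i-1$, i.e.\ the sample variances of the individual coordinates, which are a.s.\ strictly positive regardless of whether $n<p$ or not. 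Hence the quadratic part of $q_i$ is strictly convex in $L^i$; adding the convex penalty $\lambda P_\nabla(L^i)$ preserves strict convexity, and for $i=0$ the strictly convex log-barrier $-2\sum_j\log L^0_j$ on the positive orthant gives strict convexity even if one only had positive semidefiniteness.

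The main obstacle I anticipate is clerical rather than mathematical: pinning down the bijection between positions of $V$ and row/column pairs of $L$ carefully enough that the diagonal structure of $B_{ii}$ falls out cleanly. Once that combinatorics is written out, the striking conclusion that strict convexity of each block update requires only positivity of the diagonal of $S$---and not positive-definiteness of $S$ itself---is an immediate corollary, and it is exactly what makes the SC algorithm well-posed in the $n<p$ regime emphasized by the authors.
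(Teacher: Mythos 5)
Your proposal is correct and follows essentially the same route as the paper's proof: the trace--vec identity, partition of $B=S\otimes I_p$ into blocks $B_{ij}$ indexed by the subdiagonal sets $I_i$, splitting the quadratic form into diagonal and (symmetric, pairwise-distributed) cross-terms, and deducing strict convexity from $B_{ii}=\mathrm{diag}(S_{1,1},\dots,S_{p-i,p-i})\succ 0$ plus convexity of the penalty and the log-barrier. The only cosmetic difference is that the paper formalizes the partition via selection matrices $K_i$ with $L^i=K_iV$ and simply asserts the diagonal structure of $B_{ii}$, whereas you verify it through the explicit index bijection $m=k(p+1)+i+1\leftrightarrow(k+i+1,k+1)$ --- a slightly more detailed rendering of the same argument.
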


A proof of the lemma  is provided in the Appendix.
Parts (a) and (b)  are fundamental for constructing our SC algorithm in the spirit of  the
 coordinate descent algorithm in
  \citet[Lemma 2.3]{khare2016}.  However, since our objective function is not separable over the subdiagonals,  the details of the proof of our block coordinate descend algorithm differ considerably from those in \citet{khare2016}.

\subsection{ A Block Coordinate Descent Algorithm}

In this section, relying on the conditional separability as expressed in (\ref{eq:ssc}) we minimize $Q(L)$ using a
block coordinate descent algorithm
where each block corresponds to a subdiagonal of $L$ given the values of the others. The minimization of $Q(L)$ is done
\textit{sequentially} over the summands $Q_i(\cdot)$,  $0 \leq i \leq p-1$.
In this sense, our SC algorithm is  different from the recent approaches in covariance estimation where the objective functions are
either minimized by iterating over the columns of a covariance matrix \citep{Banerjee2007, Friedman2008} or the
rows of its Cholesky factor \citep{khare2016,yu2017}. However, it inherits some of the desirable convergence
properties of the latter two algorithms even though their optimization problems decouples
 into $p$ parallel problems over the rows of the matrix $L$.

The following two generic functions
stand for  the objective function restricted to each (sub)diagonal: 
\begin{equation} \label{eq:h0}
h_0(x|y_0) = 2x^t y_0 + x^tC_0x  -2 \sum_{j=1}^{p-1}\log x_j
\end{equation}
and
\begin{equation} \label{eq:h1}
h_i(x|y_i) = 2x^t y_i + x^t C_ix  + \lambda \|Dx\|_1,
\end{equation}
where $C_i=B_{ii}$ is a  diagonal matrix introduced in Lemma 1, and $y_i = \sum_{j\neq i}B_{ij}L^j ,\; 0 \leq i \leq p-1$ is a $(p-i) \times 1$ vector. Note that the function $h_0$ is from $R^p_+$ to $R$ and $h_i$ is from $R^{p-i}$ to $R$ for $1 \leq i \leq p-1$. These functions are simpler than those in \citet[equation (2.8)]{khare2016} since  the matrices $C_i$ are diagonal with positive diagonal entries so that for a fixed vector $y_i$, $h_i$'s are strictly convex functions (Lemma~\ref{l:l1}).
 We note that a block coordinate descent algorithm which sequentially optimizes $h_i$ with respect to each $L^i$ will also optimize the objective function $Q(L)$. 

 Consider  the global minimizers of $h_0$ and $h_i$:
\begin{equation} \label{eq:H0}
x^*_0 = \argmin_{x \in \mathcal{R}^p_+} h_0(x|y_0)\;\;\mbox{and}\;\; x^*_i = \argmin_{x \in \mathcal{R}^{p-i}} h_i(x|y_i).
\end{equation}
Next, we show that the vector $x^*_0$ has a closed-form and provide  methods to compute $\{x^*_i\}^{p-1}_{i=1}$
for various members of the fused-type Lasso family.


\begin{lemma} \label{l:step7}
\begin{enumerate}
\item[(a)] For a given $y_0$, $x^*_0$ is unique and its entries have the closed-form:
\begin{equation} \label{eq:iterd} (x^*_0)_1 = 1/\sqrt{(C_0)_{1,1}}, \;\;\mbox{for}\;\; i=2,\dots,p,\;\;\; (x^*_0)_i = \frac{-(y_0)_i+\sqrt{(y_0)_i^2+4(C_0)_{i,i}}}{2(C_0)_{i,i}}. \end{equation}
\item[(b)] For a given $y_i$ ($1 \leq i \leq p-1$),  $x^*_i$ corresponds to the unique solution of the  fused lasso problem \citep[Algorithm 1]{tibshirani2011} for the $i$th subdiagonal of $L$.
\item[(c)]  When  $D$ in (\ref{eq:h1}) is the matrix of
second-order differences, then
\begin{itemize}
\item[(1)] $x_i^*$ corresponds to the solution of the $\ell_1$-trend filtering \citep[Section 6]{kim2009}.
\item[(2)] For $h_i(x|y_i) = 2x^t y_i + x^t C_ix  + \lambda \|Dx\|^2_2,\,(1 \leq i \leq p - 1)$, $x_i^*$ has a closed form and corresponds to the H-P solution:
\[x_i^* = -\frac{1}{2}(C_i + \lambda (D^tD))^{-1}y_i\]
\end{itemize}
\item[(d)] For $\lambda_1 > 0$, the solution of \textit{sparse fused lasso},
\begin{equation} \label{eq:sfused}
\argmin_{x \in \mathcal{R}^{p - i}} \tilde h_i(x|y)= h_i(x|y) +  \lambda_1\|x\|_1,\; 1 \leq i \leq p - 1
\end{equation}
is given by
\[ \hat x_i(\lambda_1,\lambda_2)=\mbox{sign}( \hat x_i(0,\lambda_2))(| \hat x_i(0,\lambda_2)|-\frac{1}{2}(\mbox{diag}(C^{-1}_i))\lambda_1)_{+},\]
where $\hat x_i(0, \lambda_2)$ is the solution of  (\ref{eq:sfused}) when $\lambda_1=0$ and $\lambda_2 \geq 0$.
\end{enumerate}
\end{lemma}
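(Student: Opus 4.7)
My plan is to prove each of the four parts separately, using two structural facts from Lemma~\ref{l:l1}: each $C_i = B_{ii}$ is diagonal with strictly positive entries, and each $h_i(\,\cdot\,|y_i)$ is strictly convex on its domain. Strict convexity yields uniqueness of the minimizer as soon as existence is shown; the substance lies in producing the explicit formulas.

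For part (a), diagonality of $C_0$ makes $h_0(x|y_0)$ separable as $\sum_k [(C_0)_{k,k} x_k^2 + 2 (y_0)_k x_k - 2\log x_k]$. For each coordinate $k\geq 2$, the stationarity condition $(C_0)_{k,k} x_k^2 + (y_0)_k x_k - 1 = 0$ has a unique positive root given by the quadratic formula, yielding the stated expression. For $k=1$, a short computation using the explicit Kronecker structure of $B = S\otimes I_p$ shows that every entry of the first row of $B_{0j}$ vanishes whenever $j\geq 1$ (the Kronecker-delta factor $\delta_{1,\,m+j}$ is zero for $m\geq 1$, $j\geq 1$), so $(y_0)_1 = 0$ and the general formula collapses to $1/\sqrt{(C_0)_{1,1}}$.

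For parts (b) and (c), completing the square in the quadratic part of $h_i$ rewrites the objective as a weighted squared-error loss with positive diagonal weights $(C_i)_{k,k}$ plus the stated penalty. With $D$ the first-difference operator and an $\ell_1$ penalty, this is the (weighted) fused-lasso problem, whose unique solution is delivered by Algorithm~1 of \citet{tibshirani2011}, proving (b). Replacing $D$ by the second-difference matrix is by definition $\ell_1$-trend filtering \citep{kim2009}, giving (c)(1). Part (c)(2) is an unconstrained strictly convex quadratic; the first-order condition $2 y_i + 2 C_i x + 2\lambda D^t D x = 0$ yields the stated closed form, with $C_i + \lambda D^t D$ invertible because $C_i$ is positive definite and $D^t D \succeq 0$.

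Part (d) is the most delicate step, as the standard sparse-fused-lasso soft-thresholding identity must be extended from an orthogonal design to the diagonal $C_i$ here. My strategy is to verify the KKT conditions of (\ref{eq:sfused}) directly for the candidate $\tilde x_k = \mathrm{sign}(\hat x_k)(|\hat x_k| - \tfrac12(C_i)_{k,k}^{-1}\lambda_1)_+$, where $\hat x = \hat x_i(0,\lambda_2)$. On coordinates with $\tilde x_k \neq 0$, diagonality of $C_i$ gives $2(C_i)_{k,k}(\tilde x_k - \hat x_k) = -\lambda_1\,\mathrm{sign}(\tilde x_k)$, which is absorbed exactly by $\lambda_1\,\partial|\tilde x_k|$; on killed coordinates, the fused-lasso KKT condition for $\hat x$ combined with $|\hat x_k| \leq \lambda_1/(2(C_i)_{k,k})$ supplies a valid subgradient in $[-1,1]$. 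The key obstacle is that reusing the fused-lasso subgradient $s \in \partial\|D\hat x\|_1$ for $\tilde x$ requires $D\tilde x$ to share the sign pattern of $D\hat x$; this follows because coordinate-wise soft-thresholding with per-block thresholds preserves the piecewise-constant block structure of a fused-lasso solution, so the nonzero differences retain their signs.
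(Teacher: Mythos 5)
Your parts (a)--(c) follow the paper's own proof almost verbatim: for (a) the coordinatewise stationarity equation of the separable objective plus the positive root of the resulting quadratic, together with the observation that $(y_0)_1=0$ because the first row of $B_{0j}$ vanishes for $j\geq 1$ (exactly the paper's ``by construction'' justification); for (b) and (c)(1) completing the square with $\tilde y_i=-C_i^{-1/2}y_i$ to land on a generalized-lasso problem $\min_x \|C_i^{1/2}x-\tilde y_i\|_2^2+\lambda\|Dx\|_1$ with unique solution; and for (c)(2) the first-order condition of a strictly convex quadratic, with $C_i+\lambda D^tD$ invertible since $C_i\succ 0$ and $D^tD\succeq 0$. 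These are correct and require no comment beyond noting the match.

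Part (d) is where your proposal genuinely breaks, and in an instructive way: the paper omits this proof entirely (``similar to Friedman et al.\ (2007), Proposition 1''), while you attempt the KKT verification and correctly isolate the crux, namely that reusing $\hat s\in\partial\|D\hat x\|_1$ at the thresholded point requires the difference-sign pattern to be preserved. Your resolution of that obstacle, however, is false: the thresholds $\tfrac{\lambda_1}{2(C_i)_{k,k}}$ are \emph{per-coordinate}, not per-block, and the diagonal entries $(C_i)_{k,k}=S_{k,k}$ need not be constant within a fused block of $\hat x_i(0,\lambda_2)$, so soft-thresholding splits blocks rather than preserving them. Concretely, take $p-i=2$, $D=[-1\;\,1]$, $C_i=\mathrm{diag}(1,2)$, $y_i=(-1,-4)^t$, $\lambda_2=4$: the fused-lasso solution is $\hat x=(5/3,5/3)^t$ with interior subgradient $\hat s=1/3$. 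For $\lambda_1=0.1$ your candidate is $(5/3-0.05,\;5/3-0.025)^t$, whose strictly positive difference forces $s=+1$, and the first stationarity equation then has residual $2(5/3-0.05)-2-4+0.1=-8/3\neq 0$; a direct KKT check shows the true minimizer instead remains fused at $x_1=x_2=49/30$ (with $s=41/120$). In Friedman's setting the argument works precisely because the threshold is \emph{uniform}: soft-thresholding with a common threshold is monotone, so equal coordinates stay equal and nonzero differences cannot flip sign. Hence your final step --- and indeed the lemma's part (d) as stated --- is valid only under the additional hypothesis that the relevant diagonal entries of $C_i$ coincide (e.g.\ $C_i\propto I_{p-i}$ after standardizing the data to unit variances, which is the paper's simulation setting); for heterogeneous $C_i$ the exact soft-thresholding identity fails and no choice of subgradient can rescue the candidate.
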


A proof of the lemma is provided in the Appendix. It provides the necessary ingredients for minimizing the objective function (\ref{eq:ssc}) via the following block coordinate
descent  algorithm where each block is a (sub)diagonal of the standard Cholesky factor $L$.


\begin{algorithm}[H]
\caption{The SC algorithm}\label{a:SSC}
\begin{algorithmic}[1]
\BState \emph{input}:
\State $\textit{$\epsilon,\lambda,k_{max}$} \gets \textit{Stopping criteria, Tuning Parameter, and max. number of iteration}$
\State $\textit{${L}^{(0)}$} \gets \textit{Initial Cholesky factor }$
\BState \emph{Set $B \gets S\otimes I_p ;\;C_i \gets B_{ii}$} \label{a:SSC:0} 
\BState \emph{while $\|L^{(k+1)}- L^{(k)}\|_{\infty}>\epsilon$ or $k < k_{max}$}:
\State $\quad L^{(k)} \gets L^{(0)}$
\BState \emph{$\quad $for $i=0,\dots,p-1$ do}:
\State   $\quad \quad  \hat L^{i} = \argmin h_i(L^i|y_i)$  \label{a:SSC:1}  \label{s:hi}
\State  \emph{$\quad \quad$Update $L^{(k)}$ by replacing the $i$th subdiagonal by $\hat L^i$ }
\State  $ \quad L^{(0)} \gets L^{(k)};\;k = k+1$
\BState \emph{Output}:$\;L$
\end{algorithmic}
\end{algorithm}
We note that the Algorithm~{\ref{a:SSC}} is well-defined so long as the
diagonal entries of sample covariance matrix and the initial Cholesky factor are positive. That is the minimum in the
 optimization appearing in line~\ref{s:hi} of the algorithm is attained. This
follows from Part (b) of Theorem~\ref{l:conv} and the fact that $h_i$'s
are strictly convex functions of $L^i,\; 0\leq i \leq p-1$.


\subsection{Convergence of the SC Algorithm}

In this section,  we  establish convergence of the SC algorithm under the weak restriction that the diagonal entries of $S$ are positive.

A key step is to reduce the objective function
(\ref{eq:ssc}) to  the following  widely used objective
 function  in the statistics and machine learning communities \citep{khare2014}:

\begin{equation} \label{eq:f2}
h(x) = x^t E^tEx - \sum_{i \in C^c}\log x_i + \lambda \sum_{i\in C}|x_i|
\end{equation}
where $\lambda>0$ is a tuning parameter, $C$
is a given subset of indices and the matrix $E$ does
not have a zero column. Since the objective function restricted to  each subdiagonal (line~\ref{a:SSC:1} in Algorithm~\ref{a:SSC}) is strictly convex, a unique global minimum with respect to each subdiagonal is guaranteed even when $n < p$. This additional strict convexity property along with
Theorems 2.1 and 2.2 in \citet{khare2014} are the key ingredients for showing that the iterates in SC algorithm converge to
the global minimum of the objective function $Q$.

\begin{theorem}\label{l:conv}
\begin{enumerate}
\item[(a)] The objective function $Q(L)$ with the fused Lasso penalty admits  the generic form:
\begin{equation} \label{eq:tc}
h(x)= x^tE^tEx - \sum_{i =1}^{p}\log x_i + \lambda \sum_{j \in C}|x_i|,
\end{equation}
where,
$$x=[L_{1,1},\dots,L_{p,p},L_{3,2}-L_{2,1},\dots,L_{p,p-1}-L_{p-1,p-2},\dots,L_{p,2}-L_{p-1,1},L_{p,1}]^t, $$ and the set $C$ of indices consists of the last element of $x$ and along with those  of difference forms, and $E$ is a suitable matrix with no $0$ columns.

\item[(b)] If $diag(S)>0$, then the sequence of iterates $\{L^{(k)}\}$ in Algorithm~\ref{a:SSC} converges to a global minimum of $Q$.

\end{enumerate}
\end{theorem}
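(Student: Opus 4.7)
The plan is to prove part (a) by an explicit linear change of variables that exposes the generic form \eqref{eq:f2}, and then deduce part (b) from the convergence results of \citet{khare2014} combined with the block strict convexity established in Lemma~\ref{l:l1}(c).

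For part (a), I would begin from Lemma~\ref{l:l1}(a), which writes the quadratic term of $Q(L)$ as $\operatorname{tr}(L^tLS) = V^t(S\otimes I_p)V$ with $V=\operatorname{vec}(L)$. Since $S\otimes I_p$ is positive semidefinite, I factor it as $G^tG$ via its symmetric square root. Next I construct an invertible linear map $V = Mx$ whose output variable $x$ is the one described in the theorem: the $p$ diagonal entries first, followed by first differences along each subdiagonal, together with the anchor data (notably the single corner entry $L_{p,1}$) needed to reconstruct each subdiagonal from its differences. After substitution, $\operatorname{tr}(L^tLS) = x^t(GM)^t(GM)x$, so the choice $E:=GM$ supplies the quadratic piece. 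The log-determinant reduces to $-2\sum_{i=1}^p \log x_i$ because the first $p$ coordinates of $x$ are precisely the diagonal entries of $L$, and the fused-lasso penalty $\lambda\sum_{i=0}^{p-1}P_\nabla(L^i)$ reads as $\lambda\sum_{j\in C}|x_j|$, with $C$ indexing exactly the first-difference coordinates together with the corner coordinate $L_{p,1}$. To rule out zero columns of $E$, I would use that $M$ has full column rank (by invertibility of the reparameterization) and that the Kronecker structure of $S\otimes I_p$, together with positivity of its diagonal, prevents any column $Me_k$ from lying in $\ker G$.

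For part (b), with (a) in hand I would invoke Theorems~2.1 and~2.2 of \citet{khare2014}. Two hypotheses must be verified: (i) $E$ has no zero columns, already ensured by (a); and (ii) strict convexity of $h$ restricted to each block used by the algorithm. Hypothesis (ii) follows from Lemma~\ref{l:l1}(c): on each subdiagonal block the block-Hessian equals $2B_{ii}$, which is diagonal with positive entries because $B=S\otimes I_p$ has diagonal consisting of replicates of $\operatorname{diag}(S)>0$; on the main-diagonal block, the logarithmic barrier contributes an additional layer of strict convexity. The closed-form and fused-lasso block solvers of Lemma~\ref{l:step7} further ensure each block subproblem has a unique attained minimizer, so Algorithm~\ref{a:SSC} is a genuine block coordinate descent on a jointly convex objective with strict convexity per block. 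Theorem~2.1 of \citet{khare2014} then yields that every limit point of $\{L^{(k)}\}$ is a stationary point of $h$, which joint convexity upgrades to a global minimizer; Theorem~2.2 promotes this to convergence of the full sequence, concluding part (b).

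The main obstacle I anticipate is verifying the no-zero-columns property of $E$ in part (a). The assumption $\operatorname{diag}(S)>0$ directly controls columns of $V$-space that correspond to entries of $L$, whereas the coordinates of $x$ parameterizing first differences map to \emph{combinations} of columns of $V$ under $M$; consequently, checking that $Me_k \notin \ker(S\otimes I_p)$ for each difference coordinate $k$ requires a careful column-by-column argument that exploits both the Kronecker block structure of $B$ and the precise $\pm 1$ pattern of $M$. The subsequent transfer of the Khare--Rajaratnam cyclic-coordinate convergence result to the block scheme of Algorithm~\ref{a:SSC} is comparatively routine once blockwise strict convexity and unique block minimizers are in hand.
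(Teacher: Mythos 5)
Your part (a) is essentially the paper's argument: the same change of variables to diagonal entries plus first differences along each subdiagonal (the paper writes $L^{-0}=A\theta$ with $A$ block lower-triangular of ones, forms $M=\tilde A^t\tilde B\tilde A$ and factors the positive semi-definite $M$ as $E^tE$, which is equivalent to your $E=GM$ with $B=G^tG$), and the same use of $\mathrm{diag}(S)>0$ to rule out zero columns of $E$. One refinement you should note even in (a): as the paper observes after (\ref{eq:trsc}), the reformulated objective is \emph{not} exactly of the form (\ref{eq:f2}), because the anchor coordinates $\theta^j_1=L^j_1$ of each subdiagonal appear in neither the log-barrier nor the $\ell_1$ term; the paper disposes of this with a further transformation in the spirit of \citet[Lemma 2.4]{rojas2014}. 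Your proposal treats the reduction as if it landed squarely in the generic class, which it does not.

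The genuine gap is in part (b). You pass from blockwise strict convexity (Lemma~\ref{l:l1}(c)) and unique block minimizers directly to the conclusions of \citet{khare2014}, but those theorems do not run on block strict convexity alone: one must verify their assumption (A5)$^*$, namely that the level set $\{L: Q(L)\le Q(L^{(0)})\}$ is bounded. This is exactly what the paper's Lemma~\ref{lemma-np} supplies, via a Schur-complement completion-of-squares showing $\inf_{L\in\mathcal{L}_p}Q(L)\ge -\mathbf{1}_p^tK\mathbf{1}_p>-\infty$ and, more importantly, that $Q$ is coercive, so its level sets are bounded and every global minimizer stays in the open set $\mathcal{L}_p$. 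Nothing in your argument substitutes for this: when $n<p$ the matrix $E^tE$ is singular, so $Q$ is \emph{not} jointly strictly convex, and blockwise strict convexity is compatible with iterates that drift to infinity along directions in $\ker E$ that touch only the unpenalized anchor coordinates — precisely the directions your no-zero-columns check for $E$ does not control. Without coercivity your invocation of ``Theorem 2.1 yields that every limit point is a stationary point'' is vacuous in principle (limit points need not exist), and the upgrade to convergence of the full sequence via Theorem 2.2 fails at the same hypothesis. To repair the proof you need the uniform lower bound and coercivity argument of Lemma~\ref{lemma-np} (or an equivalent), after which the route you sketch coincides with the paper's.
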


Proof of the theorem given in the Appendix  relies on the following:

\begin{lemma} \label{lemma-np}
 For every $n$ and $p$
$$\inf_{L \in \mathcal{L}_p}Q(L) \geq - \mathbf{1}^t_pK \mathbf{1}_p > -\infty, $$
where $\mathbf{1}_p$ is a $p \times 1$ vector of 1's and $K$ is a positive semi-definite matrix. Moreover, any global minimizer of $Q(L)$ over the open set $\mathcal{L}_p$ lies in $\mathcal{L}_p$.
\end{lemma}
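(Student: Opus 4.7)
The plan is to establish the two claims in turn: the finite lower bound on $Q(L)$ and the location of any global minimizer in the open set $\mathcal{L}_p$. The first step is to invoke Theorem~\ref{l:conv}(a), which rewrites
$$Q(L) = h(x) = x^t E^t E x - \sum_{i=1}^{p}\log x_i + \lambda \sum_{j\in C}|x_j|,$$
where $x$ is a bijective linear image of the non-redundant entries of $L$ with positive log coordinates $x_i = L_{ii}$ for $i=1,\dots,p$, the index set $C$ collects the penalized differences and the anchor $L_{p,1}$, and $E$ has no zero column. The task then reduces to bounding $h(x)$ below.

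The structural identity I would exploit is that restricting to diagonal $L$ yields
$$x^t E^t E x\big|_{L\,\text{diagonal}} = \mathrm{tr}(LSL^t)\big|_{L\,\text{diagonal}} = \sum_{i=1}^p S_{ii}\,x_i^2,$$
which under $\mathrm{diag}(S)>0$ is positive definite in the log coordinates, so the principal block $A_{WW} = \mathrm{diag}(S_{11},\dots,S_{pp})$ of $A := E^t E$ indexed by the log coordinates $W$ is PD even when $n<p$. Completing the square in $x_W$ produces $x^t A x = \tilde x_W^t A_{WW}\tilde x_W + x_{-W}^t \tilde A\, x_{-W}$, with Schur complement $\tilde A \succeq 0$ and $\tilde x_W = x_W + A_{WW}^{-1} A_{W,-W} x_{-W}$. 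I would then use the one-variable bound $a y^2 - 2\log y \geq 1 + \log a$ (for $a,y>0$) to absorb the logarithms into $\tilde x_W^t A_{WW}\tilde x_W$, contributing a finite constant of order $p + \log\det\mathrm{diag}(S)$. Residual shift-induced logarithms, which grow only logarithmically in $\|x_{-W}\|$, are then dominated by the lasso $\lambda\sum_{j\in C}|x_j|$ along penalized directions, while the PSD Schur complement $\tilde A$ provides coercivity in the remaining directions. Controlling cross terms with Young's inequality $|ab|\leq \varepsilon a^2 + b^2/(4\varepsilon)$ and collecting the resulting constants yields an explicit $p\times p$ PSD matrix $K$ with $Q(L) \geq -\mathbf{1}_p^t K \mathbf{1}_p$.

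For the second claim, I would argue that any minimizing sequence $\{L^{(k)}\}$ stays bounded away from the boundary of $\mathcal{L}_p$: if $L_{jj}^{(k)}\to 0^+$ along a subsequence for some $j$, then $-2\log L_{jj}^{(k)}\to +\infty$ while all other terms remain bounded on a minimizing sequence, forcing $Q(L^{(k)})\to +\infty$ and contradicting minimality. Together with the coercivity at infinity already established, the sublevel sets of $Q$ are compact subsets of $\mathcal{L}_p$, so any global minimum is attained in the open set $\mathcal{L}_p$.

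The hardest part is the rank-deficient regime $n<p$, where $E^t E$ is singular: the argument rests on (i) positive definiteness of $A_{WW}$ (from $\mathrm{diag}(S)>0$ together with the Kronecker structure of $E^t E$), and (ii) the lasso component of the penalty ($\lambda>0$) providing the coercivity along null directions of $E^t E$ that the quadratic alone cannot supply. Assembling all the cross-term bounds into a single PSD matrix $K$ is the most delicate bookkeeping, though the boundedness below follows transparently from the decomposition outlined above.
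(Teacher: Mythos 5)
Your overall architecture --- reduce to the generic form of Theorem~\ref{l:conv}(a), exploit the diagonal block $\mathrm{diag}(S_{11},\dots,S_{pp})$, complete a square against a Schur complement, and absorb the logarithms via $ay^2-2\log y \ge 1+\log a$ --- is close in spirit to the paper, but you complete the square in the wrong block, and the patch you propose for the resulting mismatch does not hold up. After eliminating $x_W$ you are left with terms $a_i\tilde x_i^2 - 2\log x_i$ where $\tilde x_i = x_i + (A_{WW}^{-1}A_{W,-W}\,x_{-W})_i$ (your $A=E^tE$): the quadratic is in the shifted variable while the logarithm is in the original one, and you control the discrepancy by asserting that (i) the lasso dominates the shift-induced $-\log\|x_{-W}\|$ terms along penalized directions and (ii) the Schur complement $\tilde A$ ``provides coercivity'' in the remaining directions. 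Neither holds in the regime $n<p$ that you rightly flag as the hard case. A merely positive semi-definite $\tilde A$ gives no growth along its null space, and the index set $C$ does not cover all of $x_{-W}$: as the paper itself notes after (\ref{eq:trsc}), the $\ell_1$ reformulation involves only $p-i-1$ of the $p-i$ components of each subdiagonal, so the leading entries $\theta^j_1 = L^j_1$ are unpenalized. When $E^tE$ is rank deficient, a null direction of $\tilde A$ supported on those unpenalized coordinates can induce an unbounded shift in the log coordinates while contributing nothing to either $x_{-W}^t\tilde A\, x_{-W}$ or the penalty; along such a ray your chain of bounds sends $h\to-\infty$, so the matrix $K$ you aim to assemble cannot be produced by this route without a further argument you have not supplied.

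The paper's proof avoids the difficulty by completing the square in the opposite order: first in $L^{-0}$ (your $x_{-W}$), absorbing the cross term into $\|B^{1/2}_{-0\,-0}L^{-0} + B^{-1/2}_{-0\,-0}B_{-0\,0}L^0\|^2_2$, so that the residual quadratic $(L^0)^tK L^0$, with $K = B_{00}-B_{0\,-0}B^{-1}_{-0\,-0}B_{-0\,0}$ the Schur complement of $\tilde B$, lives in the \emph{same} coordinates as the logarithms. It then linearizes the logs with $\log x \le x-1$ rather than your pointwise bound, and a second completion of squares produces the explicit constant $-\mathbf{1}^t_pK\mathbf{1}_p$; notably, the penalty is simply discarded as nonnegative, whereas your argument structurally depends on it --- a further sign the two routes diverge. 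Your treatment of the second claim (boundary blow-up of $-2\log L_{jj}$ on a minimizing sequence, plus coercivity giving bounded sublevel sets and an interior minimizer) matches the paper's and is fine; it is the lower-bound half that needs to be rebuilt around the $L^{-0}$-first square completion.
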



A discussion of  convergence of the sequence of iterates for $\ell_1$-trend filtering and HP is provided in the Appendix~\ref{ap:E11}.


 \subsection{Computational Complexity of the SC Algorithm}
 The sequential SC algorithm in each iteration  sweeps over the diagonal and subdiagonals of $L$ where in each sweep it must compute $y_i$ and $h_i$. For example, for fused lasso penalty, from Lemma~\ref{l:l1}, updating each subdiagonal requires solving a fused lasso problem. Therefore, the computational cost of each subdiagonal update depends on the chosen penalty function. Denoting by $R_p$ the computational cost for the chosen penalty to minimize $h_i,\; 1\leq i \leq p-1$, 
the next lemma provides the computational cost for each iteration of SC algorithm.
\begin{lemma} \label{l:cost}
The computational cost of Algorithm~\ref{a:SSC} in each iteration is $min(O(np^2 + pR_p),O(p^3 + pR_p))$ .
\end{lemma}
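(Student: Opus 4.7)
The plan is to decompose the per-iteration cost into two parts: (i) solving the $p$ strictly convex one-block subproblems $\hat L^i = \argmin_x h_i(x|y_i)$, and (ii) assembling the linear coefficient vectors $y_i = \sum_{j\neq i} B_{ij}L^j$ that drive each subproblem. Part (i) contributes exactly $pR_p$ by the definition of $R_p$, so the remaining work is to bound the total cost of step (ii) across a full sweep.

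The main structural observation, which I would derive from Lemma~\ref{l:l1}, is that since $B = S \otimes I_p$ one has $BV = \text{vec}(LS)$; hence the subvector $(BV)_{I_i}$ is precisely the $i$th subdiagonal of $LS$, and because $B_{ii}$ is a diagonal matrix precomputed in line~\ref{a:SSC:0} of Algorithm~\ref{a:SSC},
\[ y_i \;=\; (BV)_{I_i} - B_{ii}L^i \]
can be read off in $O(p)$ flops once the relevant entries of $LS$ are available. Hence the problem reduces to maintaining $LS$, or an adequate surrogate, cheaply as $L$ is updated one subdiagonal at a time.

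I would then present two maintenance strategies and take the cheaper. \textbf{(a)} \emph{Work with $S$ directly.} Precompute $LS$ in $O(p^3)$ flops, using triangularity of $L$. After updating $L^i\to\hat L^i$, the change $\delta L$ is nonzero only on the $i$th subdiagonal, so $(\delta L)S$ affects only rows $i+1,\ldots,p$ of $LS$ and can be applied in $O((p-i)p)=O(p^2)$ flops; summed over the sweep this gives maintenance cost $O(p^3)$, and hence total $O(p^3+pR_p)$. \textbf{(b)} \emph{Work with $S=n^{-1}X^t X$.} Precompute and maintain the $p\times n$ matrix $LX^t$ in $O(np^2)$ flops; the $i$th subdiagonal of $LS = n^{-1}(LX^t)X$ is then obtained in $O((p-i)n)=O(np)$ flops, and each subdiagonal update modifies $LX^t$ in $O(np)$ flops via an update supported on rows $i+1,\ldots,p$. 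Summed over the sweep this gives $O(np^2+pR_p)$. Since both strategies are valid, the algorithm may use the cheaper one, producing the stated bound $\min\bigl(O(np^2+pR_p),\,O(p^3+pR_p)\bigr)$.

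The main obstacle is not conceptual but lies in the bookkeeping for the incremental updates: I would write out explicitly that a change $\delta L$ supported on the $i$th subdiagonal makes $(\delta L)S$ (respectively $(\delta L)X^t$) a matrix whose only nonzero rows are rows $i+1,\ldots,p$, with the $k$th such row equal to $(\delta L)_{k,k-i}$ times the $(k-i)$th row of $S$ (respectively $X^t$), and then verify the flop counts row by row. Once this is in place the stated bound follows by direct summation over $i=0,\ldots,p-1$.
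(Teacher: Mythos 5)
Your proposal is correct and is essentially the paper's own argument: the paper likewise splits the per-iteration cost into the $pR_p$ subproblem solves plus assembly of the vectors $y_i$, obtains the $O(p^3)$ branch by showing each $y_i$ costs $O(p^2)$ (it counts the $\min(p-i,p-j)$ nonzeros of each block $B_{ij}$ directly, which is the same arithmetic as your incremental maintenance of $LS$ via $BV=\mbox{vec}(LS)$), and obtains the $O(np^2)$ branch by writing $B=AA^t$ with $A=(Y\otimes I_p)/\sqrt{n}$ and maintaining $r(V)=A^tV$ at $O(np)$ per block update, which is exactly your strategy of maintaining $LX^t$ since $A^tV=\mbox{vec}(LY)/\sqrt{n}$. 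The only differences are cosmetic bookkeeping (matrix form versus Kronecker form), so the two proofs coincide in substance.
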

The proof is provided in Appendix~\ref{ap:D}. For example, $R_p = O(p)$ for $x^*_i$ for the $\ell_1$-trend filtering penalty \citep{kim2009}. Thus,  the computational cost of the SC algorithm is $min(O(np^2),O(p^3))$ which is comparable to the cost of the existing sequential algorithms such as GLasso \citep{Friedman2008}, SPACE \citep{peng2009} and CONCORD \citep{khare2015} and CSCS \citep{khare2016} when iterations have been run sequentially.

\section{ Connections Among $L$, $T$ and Local Stationarity} \label{s:TandL}

A key feature of our SC algorithm  is its ability to capture the smoothness  of  subdiagonals
of the Cholesky factors through
regularized likelihood estimation rather than the traditional (non)parameteric methods.
 In this section, we explore the connection between smoothness of $T$ and $L$  when the diagonal
  elements of  $\Lambda$ are bounded away from zero.

  Smoothness of  time-varying  covariance and spectral density functions  \citep{dahlhaus1997} and subdiagonals of $L,T$
   are  usually studied    by embedding the underlying nonstationary process in a doubly indexed sequence $X_{t,N}$ (triangular arrays), and
    functions defined on the rescaled time  $u=\frac tN \in [0,1]$. For example,  Figure~\ref{fig:tvar}
    provides a simple illustration of the correspondence between the time-varying AR(1) model in (\ref{ARS}), with
    $N = p - 1$, and  the  subdiagonals of  $T$.

\begin{figure}[th!]
\setlength{\unitlength}{1cm}
\thinlines
\begin{picture}(10,6)
\put(2,2.2){$\begin{aligned}
X_{1,N}& = \alpha_1 \Big(\frac{1}{N} \Big)X_{0,N} + \sigma \Big(\frac{1}{N} \Big)\epsilon_1 \\
&\vdots \\[-1ex]
X_{N,N} &= \alpha_1 \Big( \frac{N}{N} \Big)X_{N-1,N} + \sigma \Big( \frac{N}{N} \Big)\epsilon_N \\[-1.5ex]
\end{aligned}$}
\put(9,2.2){ $ T = \begin{bmatrix}
1&0 &  \cdots & 0 \\
- \alpha_1 \Big(\frac{1}{N} \Big) &1& 0  & 0 \\
\vdots& \ddots& \ddots   & \vdots \\
0& \hdots  &-\alpha_1(\frac{N}{N}) &1
\end{bmatrix}$}
\thinlines
\qbezier(4,3.75)(7,5)(10.5,3.2)
\put(10.5,3.2){\vector(2,-1){0.15}}
\thinlines
\qbezier(4,0.8)(7,0.5)(13.4,1.1)
\put(13.4,1.07){\vector(2,1){0.15}}
\end{picture}
\caption{Depiction of a Time-Varying AR(1) and the Matrix $T$}
\label{fig:tvar}
\end{figure}

The  next lemma  connects  the  smoothness of the entries of the $i$th
 subdiagonal of the Cholesky factors $L, T$ and  the diagonal entries of the matrix $\Lambda$ viewed as functions on $[0,1]$.
       More precisely, the $i$th subdiagonal $L$ and other matrices is viewed as a function of time by writing: $L^i(\cdot):[0,1] \rightarrow R$ where
         $L^{i}(u) = L^{i}(j/N) = L^{i}_{uN}$ stands for its  $j= uN$th element. In this section,  smoothness of a function refers to the function 
         being of bounded total variation where  the total variation (TV) of a function $g(\cdot)$ is defined as $$TV(g)=\sup \Big \{ \sum_{i=1}^{l}|g(x_i)-g(x_{i-1})|:0\leq x_0 < \cdot <x_l \leq 1 \Big \},$$
  for $x_i$'s of the form $\frac iN$.

\begin{lemma} \label{l:TL}
(a)
If $\sigma(i)>c>0$, then for any $u,v \in [0,1]$ of the form $t/N$, we have
 $$| L^i(u)-L^i(v)| \leq c^{-1}|T^i(u)-T^i(v)| + c^{-2} |T^i(u)||\sigma (u)-\sigma (v)|$$



(b) If in addition,  $\sigma(\cdot)$ and the $i$th subdiagonal $T^i(\cdot)$ are functions of bounded total variation on the rescaled interval $[0,1]$ with $TV(T^i) \leq K_1$, $TV(\sigma) \leq K_2$, and $\|T^i\|_{\infty} < m$ ,  then $L^i$ is of bounded total variation and
 \begin{equation} \label{eq:tv}
 TV(L^i) \leq c^{-1}K_1 + c^{-2}K_2m.
 \end{equation}
\end{lemma}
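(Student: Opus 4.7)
The starting point is the factorization $L = \Lambda^{-1/2} T$ from (\ref{eq:e1}), which gives the entry-wise identity $L_{rs} = T_{rs}/\sigma_r$. In the rescaled-time notation for the $i$th subdiagonal this reduces to the pointwise product rule $L^{i}(u) = T^{i}(u)/\sigma(u)$ on the grid $u \in \{1/N, 2/N,\dots, 1\}$. Every claim in the lemma is then a direct consequence of elementary manipulations applied to this ratio, together with the uniform lower bound on $\sigma$.

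For part (a), the plan is to split the difference by inserting a common denominator in a particular way:
\[
L^{i}(u)-L^{i}(v) \;=\; \frac{T^{i}(u)-T^{i}(v)}{\sigma(v)} \;+\; T^{i}(u)\Big(\tfrac{1}{\sigma(u)} - \tfrac{1}{\sigma(v)}\Big) \;=\; \frac{T^{i}(u)-T^{i}(v)}{\sigma(v)} \;+\; T^{i}(u)\,\frac{\sigma(v)-\sigma(u)}{\sigma(u)\sigma(v)}.
\]
Using $\sigma(\cdot) > c > 0$ to bound $1/\sigma(v) \leq c^{-1}$ and $1/(\sigma(u)\sigma(v)) \leq c^{-2}$, and applying the triangle inequality, delivers exactly the stated inequality. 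The only subtle choice is to place $\sigma(v)$ in the first denominator (and $T^{i}(u)$ as the factor in the second term) so that the bound is written in terms of $|T^{i}(u)|$; the symmetric splitting would instead produce $|T^{i}(v)|$.

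For part (b), the plan is to fix an arbitrary partition $0 \leq x_0 < x_1 < \cdots < x_l \leq 1$ of the grid points and sum the inequality from part (a) over consecutive pairs:
\[
\sum_{k=1}^{l} |L^{i}(x_k)-L^{i}(x_{k-1})| \;\leq\; c^{-1}\!\sum_{k=1}^{l}|T^{i}(x_k)-T^{i}(x_{k-1})| \;+\; c^{-2}\!\sum_{k=1}^{l}|T^{i}(x_k)|\,|\sigma(x_k)-\sigma(x_{k-1})|.
\]
The first sum on the right is at most $c^{-1}TV(T^{i}) \leq c^{-1}K_1$ by definition of total variation. For the second, the uniform bound $\|T^{i}\|_{\infty} < m$ pulls out of the sum, leaving $c^{-2} m \sum_{k}|\sigma(x_k)-\sigma(x_{k-1})| \leq c^{-2} m\, TV(\sigma) \leq c^{-2} m K_2$. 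Taking the supremum over partitions yields $TV(L^{i}) \leq c^{-1}K_1 + c^{-2}K_2 m$, as desired.

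There is no serious obstacle here: the argument is essentially two applications of the triangle inequality together with the definition of $TV$. The only points requiring care are (i) the choice of splitting in part (a), which dictates whether $T^{i}(u)$ or $T^{i}(v)$ appears in the second term, and (ii) the fact that the uniform bound $\|T^{i}\|_{\infty} < m$ (rather than a pointwise $TV$ bound on a product) is precisely what allows the second total-variation sum in part (b) to factor cleanly. Conceptually, the lemma is simply the quantitative statement that dividing a bounded, bounded-variation numerator by a bounded-variation denominator that stays away from zero preserves bounded variation, with explicit constants.
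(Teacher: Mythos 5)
Your proposal is correct and follows essentially the same route as the paper's own proof: the identical splitting $L^{i}(u)-L^{i}(v)=\frac{T^{i}(u)-T^{i}(v)}{\sigma(v)}+T^{i}(u)\frac{\sigma(v)-\sigma(u)}{\sigma(u)\sigma(v)}$ with the triangle inequality and the bounds $1/\sigma \leq c^{-1}$, followed by summing over consecutive partition points and invoking $TV(T^i)\leq K_1$, $TV(\sigma)\leq K_2$, $\|T^i\|_{\infty}<m$. Your part (b) is in fact slightly more careful than the paper's, which simply asserts that the bound ``follows from summing'' the part (a) inequality and cites a product-of-BV-functions theorem, whereas you spell out the arbitrary-partition-and-supremum argument explicitly.
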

The proof of the lemma is provided in the Appendix~\ref{ap:E1}.

The requirement of being of bounded variation on  $\sigma$ and $T^i$ open up a
  window to connect and extend the class of time-varying AR models to locally stationary processes.
  In particular, a process $X_{t,N}$ (t=1,\dots,N) with a time-varying MA($\infty$)-representation:
          is  locally stationary
        \citep{dahlhaus1997,dahlhaus2009, dahlhaus2012}
          if
\begin{equation} \label{eq:ma}
X_{t,N}=\sum_{j=-\infty}^{\infty}a_{t,N}\epsilon_{t-j},
\end{equation}
where  $a_{t,N}$'s are such that there exists functions $a(\cdot,j):[0,1)\rightarrow R$ satisfying $$TV(a(\cdot,j)) \leq \frac{K}{l(j)}\; \mbox{and }\;\mbox{sup}_{j}\sum_{t=1}^N |a_{t,N}(j)-a(\frac{t}{N},j)|\leq K,$$
for a constant $K$  and
$l(j) = 1,\;\mbox{for}\,|j| \leq 1$ and $l(j) = |j|(\log|j|)^{1 + k}$ otherwise.
It follows from (\ref{eq:ma}) that the time-varying spectral density and the lag-$k$ covariance  at  the rescaled time $u=t/N$ are of the form
\[
\begin{aligned}
f(u, \lambda) &= \frac{1}{2 \pi} |A(u,\lambda|^2 \\
c(u,k) & = \int_{-\pi}^{\pi} f(u, \lambda) exp(i \lambda k) d\lambda = \sum_{j = -\infty}^{\infty} a(u, k+j) a(u,j),
\end{aligned}
\]
where
\[A(u, \lambda) = \sum_{j = -\infty}^{\infty}a(u,j)exp(-i\lambda j).\]

The  time-varying AR models in (\ref{ARS}) can be enlarged to the class of locally stationary
time-varying ARMA  models \citep[Proposition 2.4]{dahlhaus2009} by choosing its coefficients
and the variance   functions to be of bounded variation. More precisely, if all the coefficients $\alpha_j(\cdot), \beta_k(\cdot)$, and
 the variance functions $\sigma^2(\cdot)$ are of bounded variation, then under usual conditions on the roots of the characteristic polynomials, the system of difference equations
\begin{equation} \label{eq:tvarma}
\sum_{j =0}^{p} \alpha_j \Big( \frac{t}{N} \Big) X_{t-j,p} = \sum_{k=0}^{q} \beta_k \Big( \frac{t}{N} \Big) \sigma \Big( \frac{t-k}{N} \Big) \epsilon_{t-k}
\end{equation}
has a locally stationary solution of the form (\ref{eq:ma}).

A related  topic of interest is the connections
between  smoothness of the standard Cholesky factor and the covariance matrix of nonstationary processes.
Interestingly, it is known \citep[Lemma 2.8]{chern2000} that  smoothness of a covariance (positive-definite matrix-valued)
function is inherited by its unique standard Cholesky factor when smoothness  is in terms of degree of differentiability.
Furthermore, it is known  \citep[Proposition 5.4]{dahlhaus2009} that the subdiagonals of
 covariance matrices of locally stationary processes are functions of bounded variation. Next, we establish  the connection
 (equivalence) between the subdiagonals  of
 the standard Cholesky factor and a covariance matrix being  of bounded variation.

\begin{prop} \label{p:p1}
\begin{itemize}
\item[(a)] If the (sub)diagonals of the Cholesky factor $L$  are of bounded  variation on the rescaled interval $[0,1]$ with $TV(L^i) \leq K_i,\; \|L^i\|_{\infty} \leq m_i\; (0 \leq i \leq p -1) $, then the (sub)diagonals  of the matrix $\Sigma = L^tL$ are of bounded  variation with
\[
TV(\Sigma^i) \leq \sum_{j = 0}^{p - i -1} (m_jK_{j+i} + m_{j+i}K_j)
\]
\item(b) The converse of (a) is true.
\end{itemize}
\end{prop}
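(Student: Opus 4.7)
The plan is to prove (a) and (b) separately. For part (a), I fix the subdiagonal notation $L^j_\ell = L_{\ell+j,\ell}$ (extending by zero outside the valid range $\ell \in \{1,\dots,p-j\}$), so that entries of $L$ and $\Sigma$ are indexed by the same subdiagonal/position pairs used in the statement. Multiplying out $\Sigma=L^tL$ entrywise and reindexing the sum $\sum_k L_{k,\ell+i}L_{k,\ell}$ by $j=k-(\ell+i)$ gives the fundamental identity
\[
\Sigma^i_\ell \;=\; \sum_{j\geq 0} L^{i+j}_\ell\, L^j_{\ell+i}.
\]
The main step is then to apply the Leibniz-type identity $ac-bd = a(c-d)+(a-b)d$ summand-by-summand to the consecutive difference $\Sigma^i_{\ell+1}-\Sigma^i_\ell$ and use the pointwise bound $\|L^k\|_\infty \leq m_k$, which yields
\[
|\Sigma^i_{\ell+1}-\Sigma^i_\ell| \;\leq\; \sum_j m_{i+j}\,|L^j_{\ell+i+1}-L^j_{\ell+i}| \;+\; \sum_j m_j\,|L^{i+j}_{\ell+1}-L^{i+j}_\ell|.
\]
Summing over $\ell$ and interchanging the order of summation turns each inner sum into a telescoping variation along a subdiagonal of $L$, bounded by $K_j$ or $K_{i+j}$ (using the usual convention that boundary drop-to-zero terms are absorbed into $TV(L^j)$). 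The range of $j$ contracts to $\{0,\dots,p-i-1\}$ since outside this range at least one factor is forced to be zero by the triangular shape of $L$.

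For part (b), I invoke the backward Cholesky recursion for $\Sigma=L^tL$. Solving entrywise starting from the bottom row $s=p$ and proceeding upward gives
\[
L_{pp}=\sqrt{\Sigma_{pp}},\qquad L_{ss}=\sqrt{\Sigma_{ss}-\sum_{k>s}L_{ks}^2},\qquad L_{st}=\frac{\Sigma_{st}-\sum_{k>s}L_{ks}L_{kt}}{L_{ss}}\quad(s>t),
\]
which, when re-expressed in subdiagonal coordinates, reads
\[
L^i_t \;=\; \frac{\Sigma^i_t-\sum_{j\geq 1}L^j_{t+i}\,L^{i+j}_t}{L^0_{t+i}},\qquad L^0_t \;=\; \sqrt{\Sigma^0_t-\sum_{j\geq 1}(L^j_t)^2}.
\]
Proceeding by downward induction on the row index $s$, each newly produced entry $L^i_t$ is an algebraic combination (finite sums, products, a quotient, a square root) of quantities already known to be of bounded variation. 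The argument is then closed by the standard $BV$-closure properties: finite sums and products of bounded $BV$ sequences are $BV$; the quotient by a $BV$ sequence bounded below by $c>0$ is $BV$; and the square root of a positive $BV$ sequence bounded both above and away from zero is $BV$. Positive definiteness of $\Sigma$ (together with the boundedness of its diagonal, which is built into the subdiagonal $BV$ hypothesis) is precisely what furnishes the uniform lower bound $L^0_t\geq c>0$ that legitimizes the division and square-root steps.

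The harder of the two is part (b). The subdiagonal recursion couples $L^i_t$ to $L^j_{t+i}$ (evaluated at a shifted position) and to $L^{i+j}_t$ (on a different subdiagonal), so no monotone induction over the single index $i$ closes the loop; the induction must instead be organized row-by-row as in the backward Cholesky sweep and then re-interpreted in subdiagonal coordinates, with careful bookkeeping about which $L^j_\cdot$ values have already been produced when $L^i_t$ is formed. Maintaining the uniform lower bound on $L^0_t$ across the recursion, and reconciling the zero-padding boundary conventions for $TV$, are where the technical care is needed. Part (a), by contrast, is essentially a single application of the product rule followed by telescoping, and yields the clean bound displayed in the statement.
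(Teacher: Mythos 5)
Your part (a) is, modulo notation, the paper's own argument: the paper decomposes $L=\sum_{j}D(L,j)$ into its subdiagonal pieces and writes $\Sigma^i=\sum_{j=0}^{p-i-1}\bigl(D(L,j)^tD(L,j+i)\bigr)^i$, which is exactly your entrywise identity $\Sigma^i_\ell=\sum_{j}L^j_{\ell+i}L^{i+j}_\ell$, and both proofs finish with the same add-and-subtract (Leibniz) step and the product rule for total variation, arriving at the identical bound $\sum_{j=0}^{p-i-1}(m_jK_{j+i}+m_{j+i}K_j)$; your zero-padding caveat matches the paper's equally informal treatment of the boundary terms. Part (b) is where you genuinely diverge. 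You run the scalar backward Cholesky recursion $L^i_t=\bigl(\Sigma^i_t-\sum_{j\geq 1}L^j_{t+i}L^{i+j}_t\bigr)/L^0_{t+i}$ row-by-row and close with BV-closure of sums, products, quotients bounded below, and square roots. The paper instead works at the matrix level: it peels off the last row and column via $G_1=\left[\begin{smallmatrix}I_{p-1}&b/\sigma_{pp}\\ 0&\sigma_{pp}\end{smallmatrix}\right]$, observes that the Schur complement $\Sigma_1=\hat\Sigma-bb^t/\sigma^2_{pp}$ stays in the TV class, recurses on the dimension, and assembles the factor as a product of triangular TV matrices using its auxiliary Lemma (a product of triangular TV matrices is TV). The matrix formulation buys a clean one-dimensional induction on matrix size and absorbs precisely the cross-subdiagonal coupling you rightly flag --- the dependence of $L^i_t$ on $L^j_{t+i}$ for all $j$ is hidden inside a single matrix product --- whereas your entrywise version is more explicit and could in principle be made quantitative, at the cost of the row-wise bookkeeping you describe. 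Both routes share the same implicit reliance on positive definiteness for the positivity of the diagonal (the paper divides by $\sigma_{pp}$ and by the diagonals of successive Schur complements without further comment), so your appeal to $L^0_t\geq c>0$ is at the same level of rigor as the paper's; indeed your quotient/square-root accounting is, if anything, slightly more careful than the paper's passing inequality $TV(\hat\Sigma^i-(bb^t)^i/\sigma^2_{pp})\leq TV(\hat\Sigma^i)$, which as written omits the (finite) variation contributed by the subtracted rank-one term.
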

The proof is relegated to the Appendix.



\section{Simulation and Data Analysis} \label{s:s3}



In this section, we illustrate and gauge the performance of our methodology using simulated and real datasets.
 We use three commonly used penalty functions: fused lasso, $\ell_1$-trend filtering and
  Hodrick-Prescott (H-P) filtering \citep{hodrick1997}, and the corresponding SC algorithm is referred to
   as SC-Fused, SC-Trend and  SC- HP, respectively.

\subsection{The Simulation Setup: Four Cases of T}

 In all simulations, the sample sizes are $n=50, 100$, and dimensions $p=50, 150$, covering settings
where $p<n$ and $p>n$, respectively. Each simulated dataset is centered to zero and scaled to unit variance.  The tuning parameter $\lambda$ is chosen from the range $[0.1,1]$ over $100$ equally spaced grid points using the
BIC and CV criterion described in the Appendix~\ref{s:tun}. We repeat the simulation 20 times.
As inputs to the Algorithm~\ref{a:SSC}, we set the tolerance
 $\epsilon = 10^{-4}$ and the initial Cholesky factor is the diagonal matrix with diagonal elements equal to $\sqrt{diag(S)}$.

 We start with a pair $(\Lambda, T)$ and use the parameterization $L=\Lambda^{-1/2}T$  as in \citep{khare2016} where $\Lambda$ is a diagonal matrix and $T$ is a unit
lower-triangular matrix constructed for  the four cases A-D described below.
For given pairs $(n,p), (T,\Lambda)$, sample data are
drawn independently from $N_p(0,(L^tL)^{-1})$. 
In each case, except for the Case B, where the number of nonzero subdiagonals is equal 2,  the number of non-zero
subdiagonals is restricted to be $5$, that is in each iteration the SC algorithm sweeps only over the first 5 subdiagonals and the rest of subdiagonals are set  to 0.  Except for the Cases A and B,  construction of the matrix $T$ starts with generating its first subdiagonal, and then filling
the rest of its subdiagonals by  eliminating the last element of the previous subdiagonal. The diagonal elements of $\Lambda^{1/2}$, for the Cases A and B are equal one and are of the form $\log((1:p)/10+2)$ for the Cases C and D.

  The four cases of $T$ with varying degrees of smoothness
(nonstationarity) of their subdiagonals and the diagonal matrix $\Lambda$ considered are:
\begin{itemize}
\item [Case A:] A stationary AR(1) model where $T$ is a Toeplitz matrix with the  value for the first subdiagonal randomly chosen from the uniform
 distribution on $[0.3, 0.7]$.
\item[Case B:] Resembles an AR(2) model as in \citet[Section 4,1]{davis2006} dealing with piecewise stationary processes:
\[X_t = \begin{cases} -0.7X_{t-1}+ \epsilon_t& 1 \leq t \leq p/2 \\ 0.4X_{t-1} - 0.81 X_{t-2} + \epsilon_t & p/2 < t \leq 3p/4 \\-0.3 X_{t-1} - 0.81X_{t-2} + \epsilon_t & 3p/4 <t \leq p \end{cases},\]
where $\epsilon_t \sim N(0,1)$. The matrix $T$  here is 2-banded and the diagonal elements of $\Lambda^{1/2}$ are equal to 1 (See Figure~\ref{fig:rs1}).
\item[Case C:] The first subdiagonal of $T$ is given by $T^1_i= 2 (i/p)^2- 0.5$, $i=1,\dots,p-1$, corresponding
to a (time) varying-coefficient AR model \citep{Wu2003}.
\item[Case D:] The first subdiagonal of $T$ is generated according to
\[T^1_i= x_i +z_i,\; i= 1,\dots,p-1,\;\; x_{i+1}=x_i + v_i,\;i= 1,\dots,p-2,\]
with $x_1 =0,z_i \sim N(0, 1)$ and $v_t$ is  a simple Markov process \citep[Section 4]{kim2009}. That is with probability m, $v_{i+1}=v_i$ and with probability $1-m$ it is chosen from the uniform distribution $[-b,b]$ where $m=0.8, b=0.5.$
\end{itemize}
 Figure~\ref{fig:rs1} illustrates plots of the first subdiagonal of the matrix $T$ versus the rescaled time in $[0,1]$ for the four cases  with $p=50$.

\begin{figure}[htb!]
\centering
\includegraphics[width=14cm,height=4.2cm]{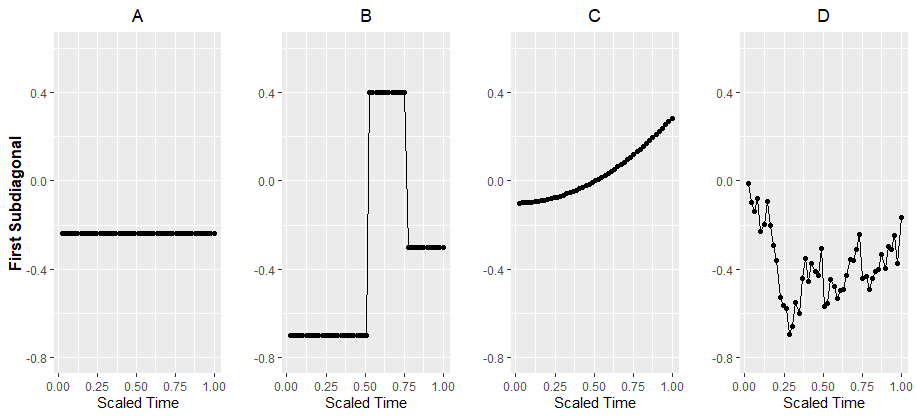}
\caption{ Cases A-D, plots of the first subdiagonal of $T$  vs rescaled time ($p=50$).}
\label{fig:rs1}
\end{figure}

\subsection{Capturing Smoothness: A Graphical Comparison}
First, we assess graphically the ability of our methodology to learn the varying degrees of smoothness of the first subdiagonal for
 the four cases introduced above.
Figures~\ref{fig:rs2}
and \ref{fig:rs3} illustrate the simulation results using the SC algorithm for $p=50$ and $150$, respectively. In each 2 by 4
 layout, each column corresponds to one of the four cases and the row to the criteria (BIC or CV) for choosing the tuning parameters.
The results
 for $n = 50$ and $n = 100$ were similar, therefore we report only those  for the larger sample size.


\begin{figure}[htb!]
\centering
\includegraphics[width=14.2cm,height=8.2cm]{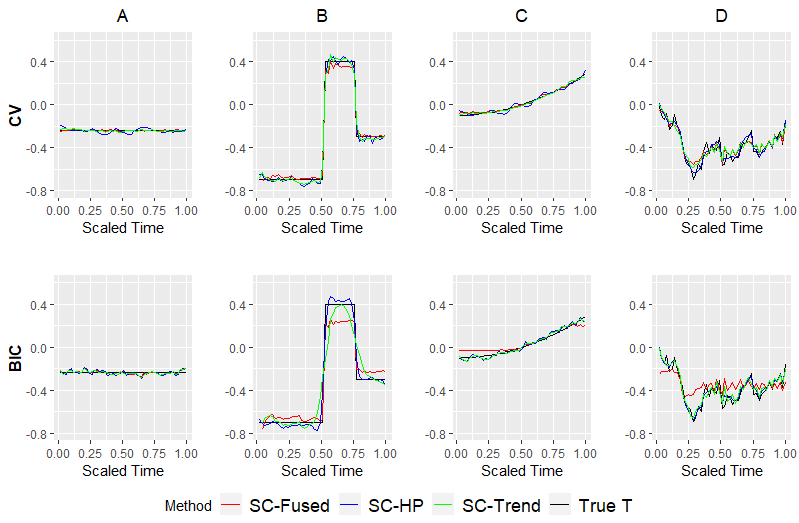}
\caption{Estimated first subdiagonal of $T$ for SC-HP, SC-Fused and SC-Trend ($p=50$).}
\label{fig:rs2}
\end{figure}

\begin{figure}[htb!]
\centering
\includegraphics[width=14.2cm,height=8.2cm]{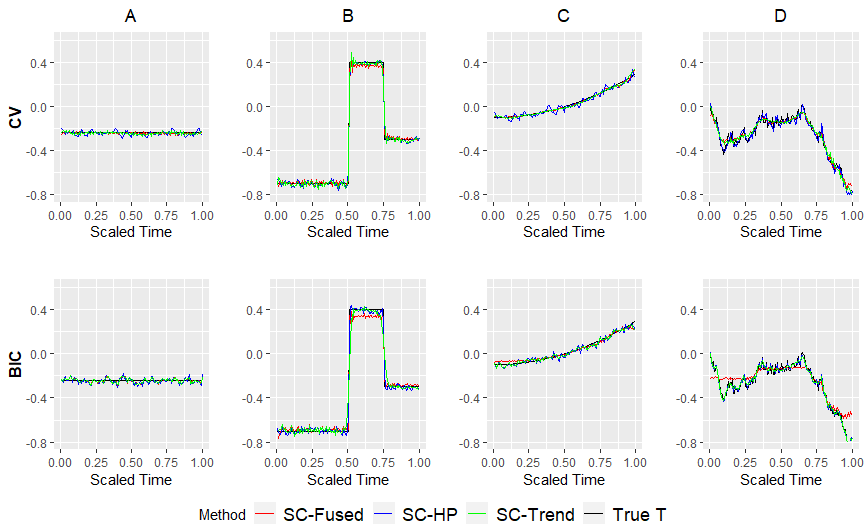}
\caption{Estimated first subdiagonal of $T$ for SC-HP, SC-Fused and SC-Trend ($p=150$).}
\label{fig:rs3}
\end{figure}

The simulation results in both figures provide ample evidence on the good performance  of the SC method for
 estimating time-varying subdiagonals. In particular, for the Case A, as expected, the SC-Fused  learns perfectly the flatness (stationarity) of the first subdiagonal, showing only some wiggliness for the BIC. For the Case B, which corresponds to  a piecewise stationary process, estimators tuned using CV and BIC correctly identify the jumps and show small oscillation around the flat segments. The CV criterion shows an advantage over the BIC for the Case C. More specifically, the SC-Trend  learns better the quadratic structure of the first subdiagonal than the other estimators. For the case D the SC-Trend and SC-HP  provide nearly identical estimates of the first subdiagonal. The results for the other subdiagonals nearly match those in Figures~\ref{fig:rs2} and \ref{fig:rs3}, and are omitted. As $p$ gets larger, there seems to be evidence of improvement in performance of the SC algorithm.

\subsection{Comparing  Estimation Accuracies} \label{s:MoP}

In this section, we compare  the accuracies of the three SC estimators: SC-HP, SC-Fused and  SC-Trend. The overall measures of
performance involve
magnitudes of the estimation errors $\hat T-T$ and $\hat L - L$,  as measured by the scaled Frobenius norm $\frac{1}{p}\|\hat A - A\|^2_F$, and
 the matrix infinity norm $|\|\hat A - A|\|_{\infty}$  for a $p \times p$ matrix $A$.


Boxplots of the overall estimation  errors  for the matrix $T$ are reported in Figures \ref{fig:aplot} through \ref{fig:dplot}, where
 each figure corresponds to a particular case, each row  to a value of $p$ and the two columns
  correspond to   using BIC and CV criteria, respectively. They corroborate   the findings in the graphical
   explorations Figures~\ref{fig:rs2} and \ref{fig:rs3}, in that the SC-Fused shows tendency to capture well cases
    with constant subdiagonals, SC-Trend and HP are better in  capturing the wiggliness and smoothness of the subdiagonal. The
    corresponding estimation errors for the matrix $L$ show similar patterns, and are thus omitted.

\begin{figure} [H]
\centering
\includegraphics[width=1\linewidth, height = 8.2cm]{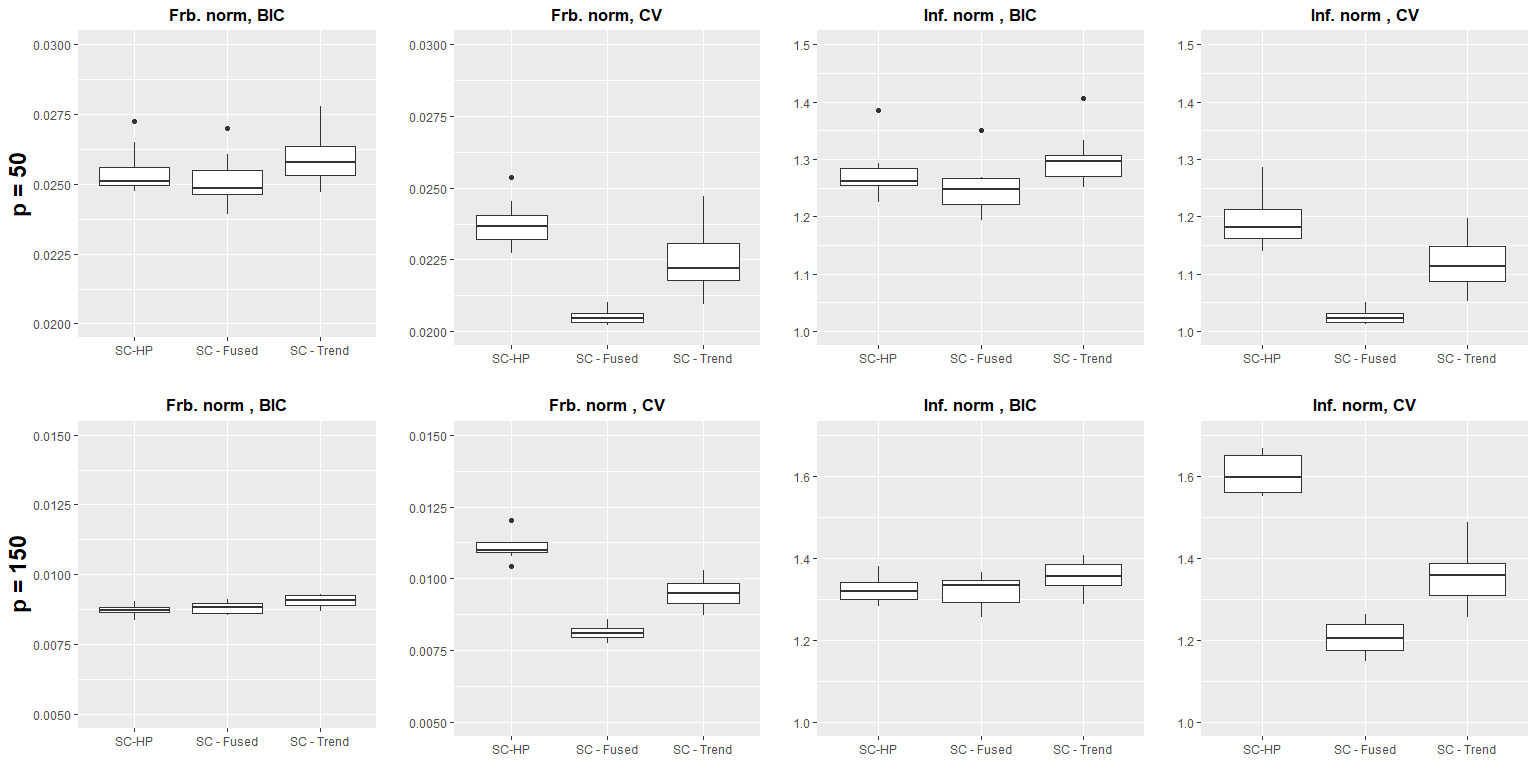}
\caption{Estimation accuracy when data are generated from Case A.}
\label{fig:aplot}
\end{figure}

\begin{figure}[H]
\centering
\includegraphics[width=1\linewidth, height = 8.2cm]{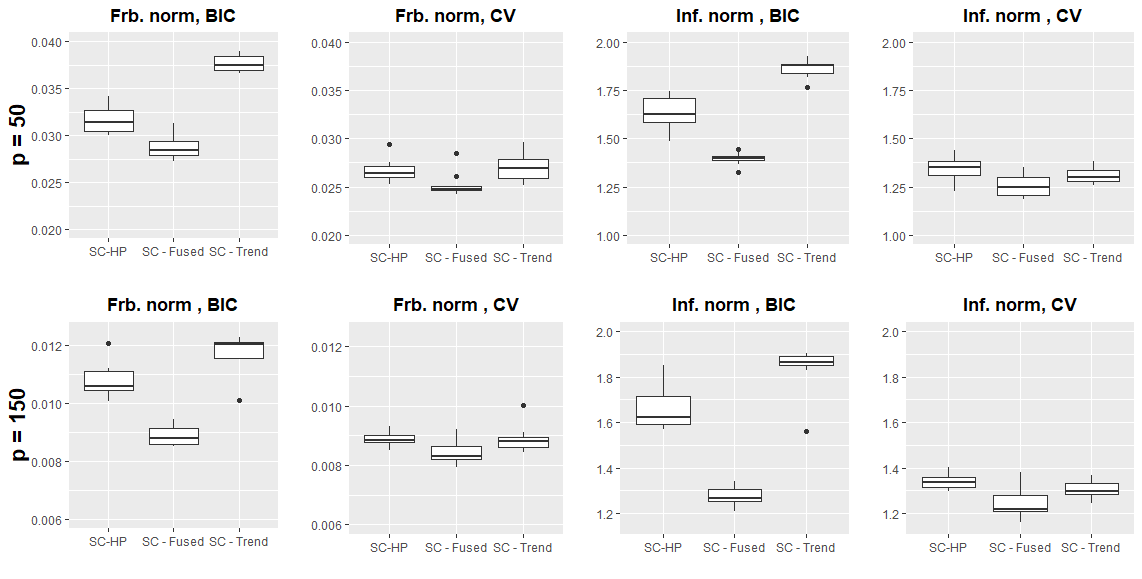}
\caption{Estimation accuracy when data are generated from Case B.}
\label{fig:bplot}
\end{figure}

\begin{figure}[H]
\centering
\includegraphics[width=1\linewidth, height = 8.2cm]{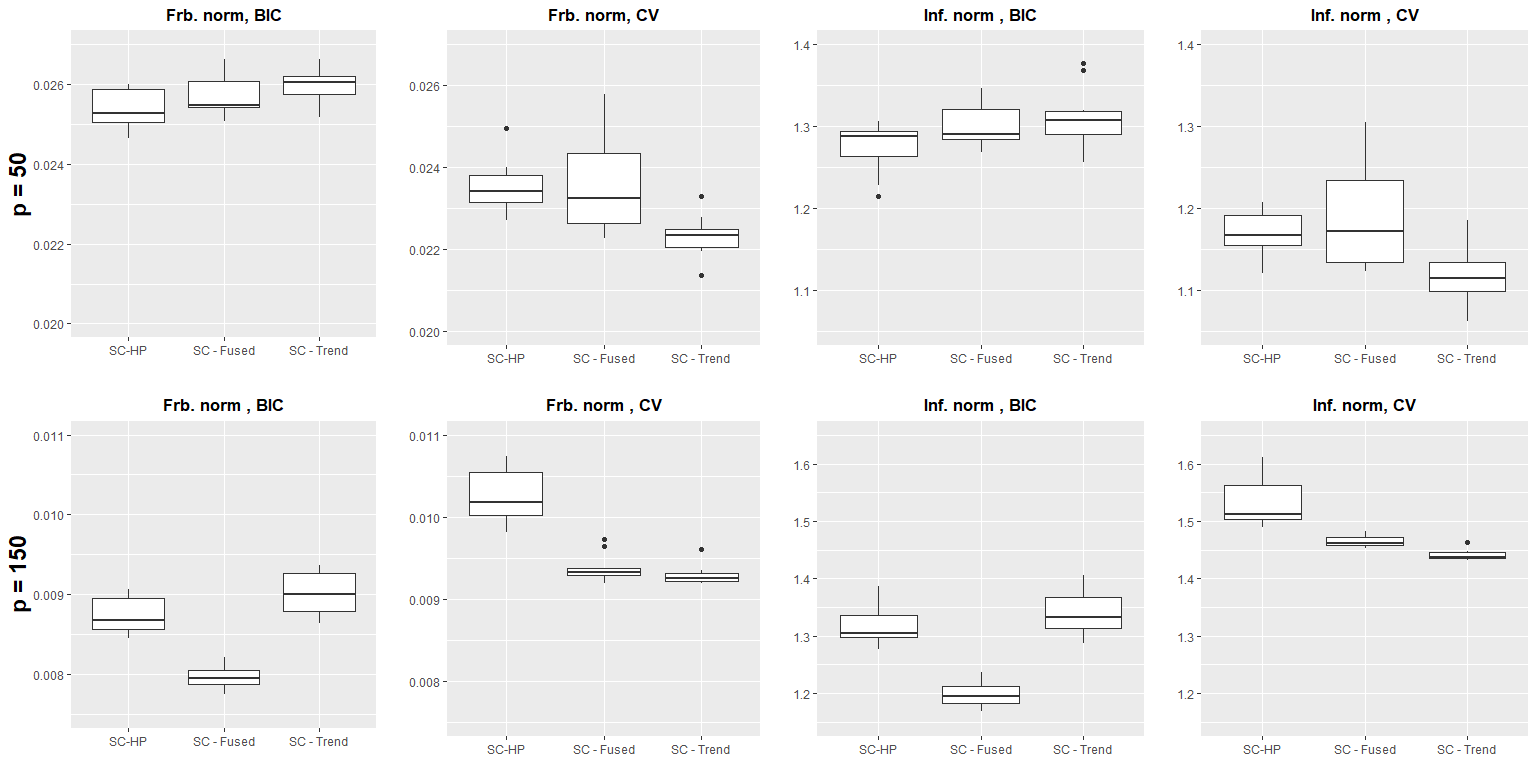}
\caption{Estimation accuracy when data are generated from Case C.}
\label{fig:cplot}
\end{figure}

\begin{figure}[H]
\centering
\includegraphics[width=1\linewidth, height = 8.2cm]{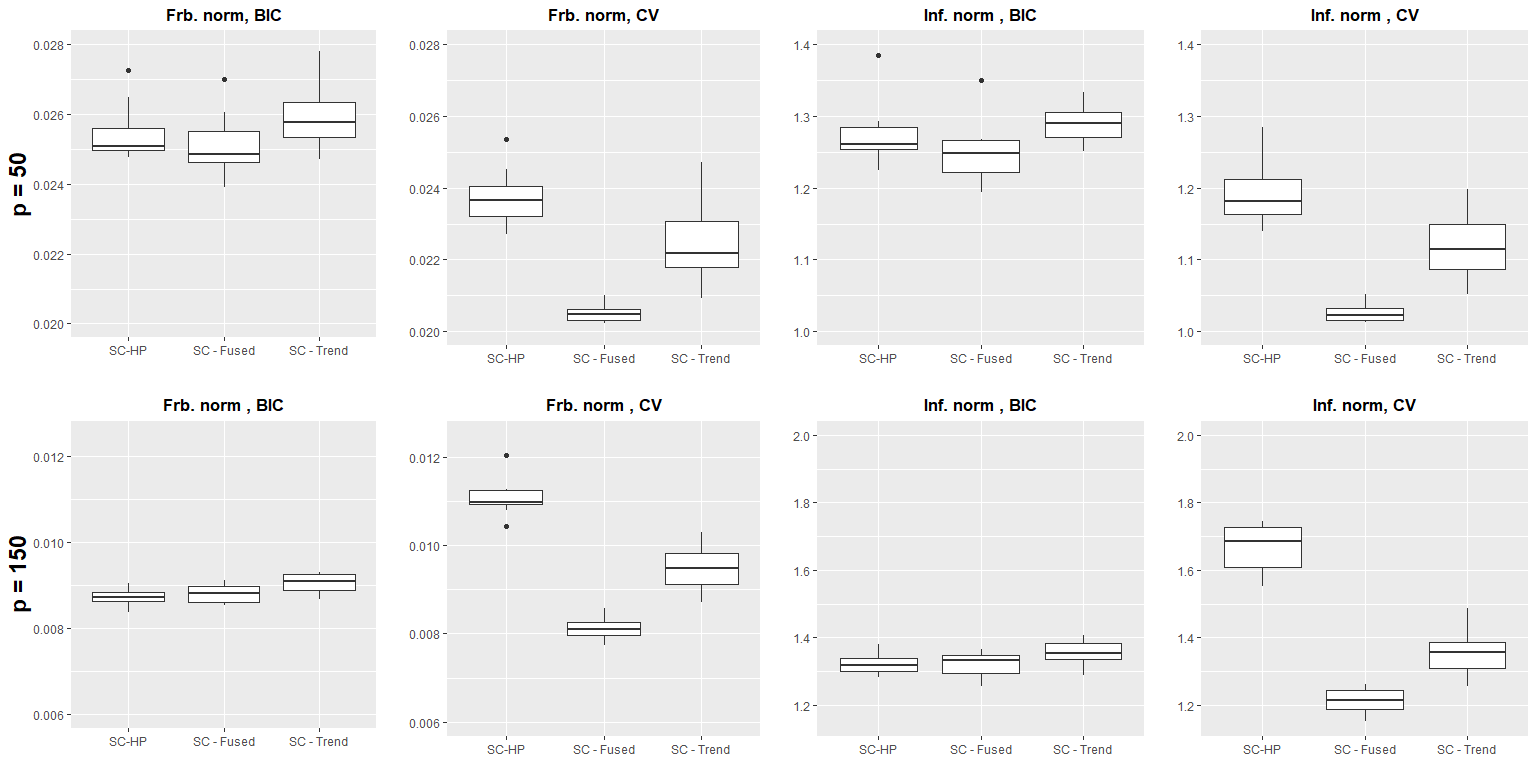}
\caption{Estimation accuracy when data are generated from Case D.}
\label{fig:dplot}
\end{figure}

  In the Appendix~\ref{s:adsim} we provide two additional simulations for a more general matrix $T$ ($L$) than
   those in Cases A-D, and to compare our sparse SC with the existing sparse Cholesky estimators (CSCS, HSC) so far support recovery is concerned. The  results confirm the good performance of the SC method. The two general matrices  are: (1) $T$ is a full lower triangular matrix and its subdiagonals are chosen randomly from the Cases (A-D), (2)
     $T$ has a nonhierarchical structure \citep{yu2017}, that is nonzero subdiagonals are followed by block zero subdiagonals and again by nonzero subdiagonals. 
\subsection{Covariance  Estimators}
 In this section, we assess the performance of our method on learning (inverse) covariance matrices for the Cases A-D.
  We compare our SC method (Fused, HP, Trend) with the CSCS and HSC methods. To make them comparable, instead of limiting the SC algorithm to run over the first five subdiagonals, as in the last two sections, here we use  the more general sparse SC estimator
   (see Lemma \ref{l:step7}) with the two  tuning parameters $\lambda_1$ and $\lambda_2$, respectively. Due to space limitation, we report
    results only for the $p = 150$ with the tuning parameters  selected using the CV criterion.

 We evaluate performance of the estimators using the scaled Kullback-Leibler loss $\frac{1}{p}\Big [tr(\hat \Omega \Sigma) - \ln |\hat \Omega \Sigma| - p \Big ]$ for the inverse covariance and scaled Frobenious norm for the covariance matrix. From results reported in Figures~\ref{fig:frbsigma} and \ref{fig:klomega} for cases A, B, and C, it is evident that
 the SC algorithm learns the covariance matrix better than the SCSC and HSC methods. In particular, for the case A, SC-Fused provides the lowest error measure and for cases B and C, SC-Trend and HP are the lowest. For the Case D, the  HSC is the best.
 For learning the inverse covariance matrix, the SC performs better for all the four cases.

\begin{figure}[H]
\centering
\begin{subfigure}{.5\textwidth}
\centering
\includegraphics[width=1\linewidth, height = 6.5cm]{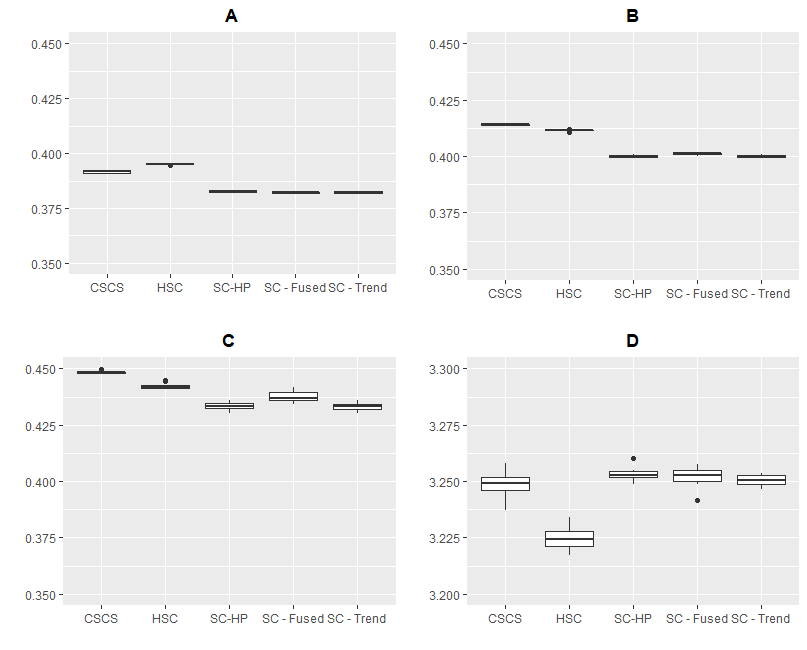}
\caption{Frobenious norm}
\label{fig:frbsigma}
\end{subfigure}%
\begin{subfigure}{.5\textwidth}
\centering
\includegraphics[width=1\linewidth, height = 6.5cm]{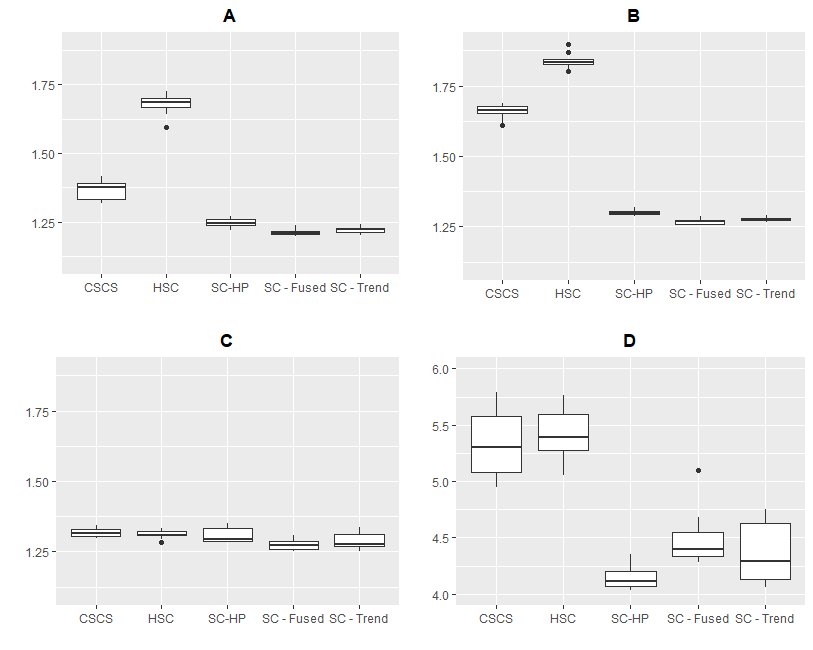}
\caption{Kullback - Leibler loss}
\label{fig:klomega}
\end{subfigure}
\caption{Performance of covariance and inverse covariance matrix estimators for $p = 150$.}
\label{fig:covest}
\end{figure}


\subsection{The Cattle Data} \label{s:cattle}
This dataset  \citet{kenward1987} is from an experiment in which cattle were assigned randomly to two treatment groups A and B. The weights of animals were recorded to study the effect of treatments on intestinal parasites. The animals were weighed $p=11$ times over 122 days. Of 60 cattle $n=30$ received treatment $A$ and the other $30$ received treatment $B$. The dataset has been widely used in the literature of longitudinal data analysis \citep{Wu2003};\citet{huang2007}.

The classical likelihood ratio test rejected equality of the two within-group covariance matrices, thus it is recommended to study each treatment group's covariance matrix separately. In this paper, we report our results for the group A cattle.
It is known \citep{zimmerman2010} that the variances and the same-lag correlations are not constant, but  tend to increase over time
, so that the covariance exhibits  nonstationarity features.
To learn the $11 \times 11$ covariance matrix, we apply the following methods : SC (HP, Fused, Trend), sample covariance S, unstructured antedependence (AD)  \citep[Section 2.1]{zimmerman2010}, autoregression process (AR), variable-order antedependece (VAD) \citep[Section 2.6]{zimmerman2010} and the structured AD model in \citep{pourahmadi1999}, referred to  as POU in the following plot. More specifically,  following \citet[Section 8.2]{zimmerman2010} we consider AD(2), VAD(0,1,1,1,1,1,1,2,2,1,1), AR(2), and POU model for which the log-innovation variances are a cubic function of time and the autoregressive coefficients are a cubic function of lag.  Tuning parameters for all three SC methods were selected using a $5-$fold cross-validation.


\begin{figure}[htb!]
\centering
\includegraphics[width=14cm,height=6cm]{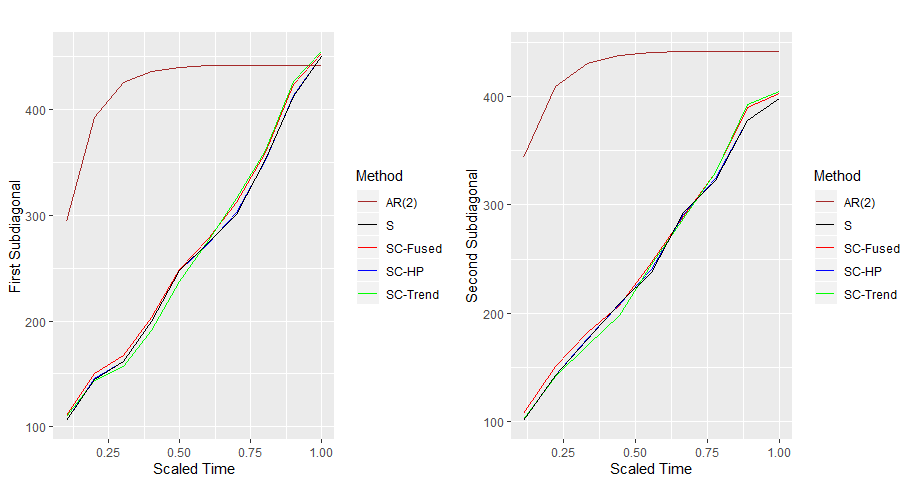}
\caption{Plots of estimated first and second subdiagonals of covariance matrix for the  various estimation methods.}
\label{fig:cplot}
\end{figure}

            We plot  the first two subdiagonals of estimated covariance matrices  for
            the SC(HP, Fused, Trend), S and AR(2) methods in Figure~\ref{fig:cplot}. It
             can be seen that the estimators of subdiagonals provided by the SC methods are almost identical to those of
              the sample covariance matrix. However, the estimated subdiagonals from AR(2) illustrate a different behavior suggesting
              that the data does not support the underlying AR model.
In Appendix we provide similar plots for all eight estimators.  In Table~\ref{tb:likcattle} we report the values of the
negative log-likelihood  for various methods which also
 confirm the results in Figure~\ref{fig:cplot}. The maximum log-likelihood is in bold.

\begin{table}[!htbp] \centering
  \caption{Log-likelihood values for various estimation methods.}
  \label{tb:likcattle}
\begin{tabular}{@{\extracolsep{5pt}} lc}
\\[-1.8ex]\hline
Method &  \\
\hline \\[-1.8ex]
SC-HP & $$-$541.836$ \\
SC-Fused & $$-$547.129$ \\
SC-Trend& $$-$546.573$ \\
AD(2) & $$-$541.451$ \\
VAD & $$-$542.861$ \\
AR(2) & $$-$1,637.894$ \\
POU & -$862.430$ \\
S & $-\mathbf{529.4207}$ \\
\hline \\[-1.8ex]
\end{tabular}
\end{table}

\subsection{The Call Center Data}
In this section, we assess the forecast performance of the SC, CSCS, and HSC algorithms by analyzing the call center data \citep{Huang2006},  from a
 call center in a major U.S. northeastern financial organization. For each day in 2002  phone calls were recorded from 7:00 AM
 until midnight, the 17-hour interval was divided into 102 10-minute subintervals, and the number of calls arrived at the service queue during each interval were counted. Here, we focus on weekdays only, since the arrival patterns on weekdays and weekends differ.

   We denote the counts  for day $i$ by the vector $N_i=(N_{i,1},\dots,N_{i,102})^t,\; i =1,\dots,239$, where $N_{i,t}$ is the number of calls arriving at the call center for the $t$th 10-minute interval on day $i$. The square root transformation
   $x_{it}=\sqrt{N_{it}+1/4},\; i=1,\dots,239,\;t=1,\dots,102$,  is expected to make the distribution closer to normal.
The estimation and forecast performances are assessed by splitting the 239 days into training and test datasets. In particular, to
 estimate the mean vector and the covariance matrix, we form the training dataset from the first $T$ days
  ($T = 205,150,100,75$). Six covariance estimators, five penalized likelihood methods, SC (HP, Fused, and Trend), CSCS and HSC, along with
  $S$ were used to estimate the $102 \times 102 $ covariance matrix of the data. The tuning (penalty) parameters were selected using 5-fold cross validation described in Section~\ref{s:tun}. We report the log-likelihood \citep{khare2016} for the test
   dataset evaluated at all above estimators in Table ~\ref{fg:loglik}, where the largest  value in each column is
   in bold. For all training data sizes, the SC algorithm demonstrates superior performance compared to the other methods. In particular, for $T = 205, 150$, the SC-Trend is the best, but for $T = 100,75$ the SC-Fused provides better results.

\begin{table}[!htbp] \centering
  \caption{Test data log-likelihood values for various estimation methods with training data size 205,150, 100, 75.}
  \label{fg:loglik}
\begin{tabular}{@{\extracolsep{5pt}} cccccc} \\
\hline \\[-1.8ex]
Methods& & \multicolumn{4}{c}{Training data size} \\
\cline{3-6} \\[-1.8ex]
& & 205 & 150 & 100 & 75 \\
\hline \\[-1.8ex]
\multirow{3}{*}{SC}
&HP & $$-$14,435.700$ & $$-$9,018.556$ & $$-$7,472.817$ & $$-$7,467.412$ \\
&Fused & $$-$13,123.300$ & $$-$8,587.785$ & $\mathbf{-7,034.868}$ & $\mathbf{-7,097.938}$ \\
&Trend & $\mathbf{-12,274.970}$ & $\mathbf{-8,477.271}$ & $$-$7,040.924$ & $$-$7,222.989$ \\
\hline \\[-1.8ex]
\multirow{2}{*}{Sparse Cholesky}
&CSCS & $$-$16,814.450$ & $$-$9,754.996$ & $$-$7,484.153$ & $$-$7,365.298$ \\
&HSC & $$-$14,382.330$ & $$-$8,971.729$ & $$-$7,395.206$ & $$-$7,342.343$ \\
\hline \\[-1.8ex]
\end{tabular}
\end{table}


  Next, we focus on forecasting the number of call arrivals in the later half of the day using arrival patterns in the earlier half of the day \citep{Huang2006}. In particular, for a random vector $\mathbf{x}_i=(x_{i,1},\dots,x_{i,102})^t$, we  partition $\mathbf{x}_i=((x^{(1)}_i)^t,(x^{(2)}_i)^t)^t$ where $x^{(1)}_i$ and $x^{(2)}_i$ are 51-dimensional vectors that correspond to early and later arrival patterns for day $i$. Assuming multivariate normality, the optimal mean squared error forecast of $x^{(2)}_i$ given $x^{(1)}_i$ is
\begin{equation} \label{eq:meanfor}
E(x^{(2)}_i|x^{(1)}_i) = \mu_2 + \Sigma_{21}\Sigma^{-1}_{11}(x^{(1)}_i - \mu_1),
\end{equation}
 corresponding to partitioning of  the mean and covariance matrix of the full vector:
\[ \mu^t=(\mu^t_1,\mu^t_2),\; \Sigma= \begin{bmatrix} \Sigma_{11} & \Sigma_{12}\\ \Sigma_{21} & \Sigma_{22}. \end{bmatrix}\]

We compare the forecast performance of six covariance estimators (SC (HP, Fused, Trend), CSCS, HSC, and S)
 by using training and test datasets described above. The sample mean and covariance matrix are computed  from the  training data for each $T$ . Using (\ref{eq:meanfor}), the  51 first half of a day arrival counts were used to forecast the second half of the day arrival counts. For each time interval $t= 52,\dots,102$, we define the forecast error (FE) by the average
\[FE_t= \frac{1}{239-T}\sum_{i=T+1}^{239}|\hat x_{it} - x_{it}|,\]
where $x_{it}$ and $\hat x_{it}$ are the observed and forecast values, respectively \citep{Huang2006}.
\begin{table}[t!] \centering
  \caption{Number of times (out of 51) each estimation method achieves the minimum forecast error for training data size 205, 150, 100, 75.}
  \label{t:ccfe}
\begin{tabular}{@{\extracolsep{5pt}} ccccc}
\\[-1.8ex]\hline
\hline \\[-1.8ex]
& \multicolumn{4}{c}{Training data size} \\
\cline{2-5} \\[-1.8ex]
 Method&205 & 150 &100 &  75 \\
\hline \\[-1.8ex]
SC-HP & 1 & 5 & 9 & \textbf{20} \\
SC-Fused & 3 & 7 & 2 & 3 \\
SC-Trend & 7 & 8 & \textbf{16} & 1 \\
CSCS & 3 & 8 & 10 & 15 \\
HSC & 12 & \textbf{13} & 14 & 12 \\
S & \textbf{25} & 10 & - & - \\
\hline \\[-1.8ex]
\end{tabular}
\end{table}
Table~\ref{t:ccfe} reports the number of times each of the six forecast methods has the minimum forecast error values out of the total 51 trials.  The maximum of the number of  times the method achieves the minimum forecast error in each column is  in bold. When $T = 205$ and the training data size is larger than the number of variables, the forecast based on the Sample covariance matrix performs the best in terms of the number of times it achieves the minimum forecast error. For $T =150$, the HSC is a  bit better than the sample covariance matrix. However, as the training data size decreases, the forecasting ability of the SC algorithm increases. In particular, the  SC-Trend and HP report the best result in terms of the number of times they achieve the minimum forecast error for $T = 100$ and $T =75$, respectively. Most of the result in Table~\ref{t:ccfe} is supported by the aggregate forecast errors reported in Table~\ref{t:afe}, where aggregate forecast error is the sum of forecasted errors over $t = 52,\dots,102$. The minimum aggregate forecast error the method achieves is in bold. The discrepancies between Table~\ref{t:ccfe} and Table~\ref{t:afe} can be explained by looking on Figure~\ref{fig:aeplot}, which illustrates  a plot of $FE_t$ for varying values of the training data size. For example, for $T = 150$ the HSC achieves the minimum forecast error the most in terms of the number of times, however SC-Fused is the lowest in terms of the aggregate forecast error. This discrepancy explained from the top right plot of  Figure~\ref{fig:aeplot}, where it can be seen that when the $FE_t$ of HSC is lowest, the $FE_t$  of SC-Fused does not concede to much, but when the $FE_t$  of SC-Fused is the lowest, HSC takes higher values, which forces the aggregate error of SC-Fused to be lower than the error of HSC,

\begin{table}[ht!] \centering
  \caption{Aggregate forecast error for each estimation method for training data size 205,150,100,75}
  \label{t:afe}
\begin{tabular}{@{\extracolsep{5pt}} ccccc}
\\[-1.8ex]\hline
\hline \\[-1.8ex]
& \multicolumn{4}{c}{Training data size} \\
\cline{2-5} \\[-1.8ex]
Methods& 205 & 150 & 100 & 75 \\
\hline \\[-1.8ex]
SC-HP & $403.555$ & $34.093$ & $24.186$ & $\mathbf{9.363}$ \\
SC-Fused & $377.938$ & $\mathbf{25.299}$ & $31.377$ & $44.403$ \\
SC-Trend & $371.936$ & $31.981$ & ${23.202}$ & $23.565$ \\
CSCS & $307.096$ & $38.130$ & $28.578$ & $13.799$ \\
HSC & $151.817$ & $28.299$ & $\mathbf{22.917}$ & $11.214$ \\
S & $\mathbf{111.276}$ & $42.432$ & $-$ & $-$ \\
\hline \\[-1.8ex]
\end{tabular}
\end{table}

\begin{figure}[t!]
\centering
\includegraphics[width=13cm,height=9.1cm]{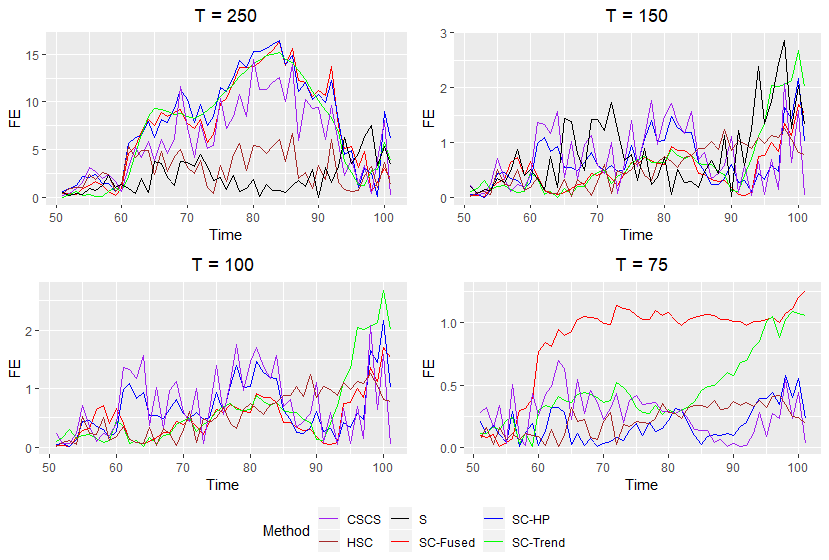}
\caption{ Forecast Error for each estimation method for training data size 205,150,100,75}
\label{fig:aeplot}
\end{figure}

An \textsf{R} \citep{rcore} package, named \texttt{SC}, is available on Github repository \citep{aram2019}. The core functions are coded in \textsf{C++}, allowing us to solve large-scale problems in substantially less time.

\section{Conclusion}

This paper proposes a novel penalized likelihood approach for smooth Cholesky-based covariance (inverse)
 estimation for longitudinal data or when a natural ordering among the variables is available.
  We  reparameterize the normal likelihood in terms of the standard Cholesky factor \citep{khare2016} and rely on
  fused-type Lasso penalties to formulate jointly convex objective functions. A block coordinate descent algorithm
is proposed to minimize the objective function. We establish  convergence of the algorithm which always leads to
positive-definite estimate of the covariance matrix.
 A goal  of the study
  was to explore the  connection between the local stationarity of a time series and  smoothness of the subdiagonals of the
  Cholesky factor of its (inverse) covariance matrix.
  A connection between subdiagonal smoothness of the standard Cholesky factor $L$ and the modified factor $T$ is established.  The
  performance of our methodology is illustrated via various simulations and two  datasets.

\section{Supporting Information}
Additional Supporting Information may be found online in the supporting information tab for this article.

\section{Data Availability Statement}
Cattle data that supports the findings of this study are available as supporting information online, while Call Center data are available on request from the corresponding author. The latter data are not publicly available due to privacy or ethical restrictions.

\clearpage

\appendix
\section{Proof of Lemma~\ref{l:l1}} \label{ap:A3}

\paragraph{(a):} We use the selection matrices $K_i$ which are $(p - i) \times p^2$ submatrices of the $p^2 \times p^2$ identity
 matrix with row indices in $I_j$ such that $L^i = K_iV$  .  Then it is evident that  $V = \sum_{i=0}^{p-1}K_i^tL^i$  and
 a compatible partition of $B$ leads to
 \begin{equation} \label{eq:e4}
\begin{aligned}
tr(LSL^{t}) & = V^tBV = \sum_{i=0}^{p-1}(L^i)^tK_iB\sum_{j=0}^{p-1}K^t_jL^j\\
	        & =  \sum_{i=0}^{p-1}\sum_{j=0}^{p-1}(L^i)^tK_iBK^t_jL^j = \sum_{i=0}^{p-1}\sum_{j=0}^{p-1}(L^i)^tB_{ij}L^j.
\end{aligned}
\end{equation}
 Note that the submatrix   $B_{ii}=\mbox{diag}(S_{1,1}, S_{2,2},\dots, S_{p-i,p-i})$ is diagonal with positive entries, and the $(p-i) \times (p -j)$ matrix $B_{ij}$ has nonzero values in the $(( 1 + j - i + k), (1+k)),\, 0 \leq k \leq \mbox{min}(p - 1 + i - j, p - i -1)$ entries, which correspond to the diagonal of the submatrix $S[(1 + j -i) : (p-i), 1 : (p-j)]$ of $S$.



\paragraph{(b):} From  rewriting (\ref{eq:e4})as
\begin{equation} \label{eq:e5}
\begin{aligned}
tr(LSL^{t}) & =\sum_{i=0}^{p-1}\Big [ (L^i)^tK_iBK^t_i L^i + \sum_{i \neq j} (L^i)^t K_iBK^t_jL^j\Big],\\
	      & = \sum_{i=0}^{p-1} \Big [ (L^i)^tB_{ii} L^i + (L^i)^t \sum_{i \neq j} B_{ij}L^j \Big ]  
\end{aligned}
\end{equation}
 the desired result follows from substituting into the objective function (\ref{eq:e2}) and noting that $|L| = \sum_{j =1}^p \log L^0_j$.
\paragraph{(c):} Since $B_{ii}$ is positive definite, then $Q_i(\cdot)$ as the sum of strictly convex and convex functions, is strictly convex \citep{Boyd2004}.

\section{Proof of Lemma~\ref{l:step7}} \label{ap:A2.1}
\paragraph{(a):} The derivative with respect to $x$ of the quadratic form in (\ref{eq:h0})  is
\begin{equation} \label{eq:upd}
-2 \sum_{i=1}^p\frac{1}{x_i}e_i + 2 C_0x + 2y_0 = 0,
\end{equation}
where $e_i$ is the $p-$vector with $i$th element equal to 1 and 0 otherwise. By construction, the
 $C_0$ matrix is diagonal and the first element of   $ y_0$ is 0, so that the first identity in (\ref{eq:upd}) is
\[ - \frac {1}{x_1} + (C_0)_{1,1}x_{1}=0 \Rightarrow x_1=1/\sqrt{(C_0)_{1,1}}.\]
Similarly, for the rows, $i = 2,\dots,p$ we have
\[-\frac{1}{x_{i}}+(C_0)_{i,i}x_{i}+ (y_0)_i=0, \]
where its non-negative solution is as given in (\ref{eq:iterd}).

\paragraph{(b):} \label{p:l2pb}
In (\ref{eq:h1}), $C_i$ is a diagonal matrix with positive entries, setting  $\tilde y_i=-C_i^{-1/2}y_i$
  and  completing the square, then finding $x^*_i$ is equivalent to solving a generalized lasso problem:
 \begin{equation} \label{eq:genl}
 \min_{x} \Big \{ \|C^{1/2}_ix-\tilde y_i\|^2_2+\lambda \|Dx\|_1 \Big \},
 \end{equation}
which has a unique solution \citet{tibshirani2011}.

\paragraph{(c):}\subparagraph{(1):} Proof is similar to the transformation in part (b).
\subparagraph{(2):} Setting the derivative of  $h_i(x|y_i)$ to zero and solving for $x_i$ gives
 \[x^*_i = -\frac{1}{2} (C_i+\lambda (D^tD))^{-1}y_i.\]
 The matrix inverse can be computed in $O(p-i)$ flops \citet{golub1996}, since $C_i$ is diagonal and $D^tD$ is a tridiagonal matrix.
 Here $p-i$ is the length of the vector $x_i, i = 1,\dots,p-1$.

\paragraph{(d):} Proof of the lemma is similar to \citet[Proposition 1]{friedman2007}, thus omitted.


%

\section{Proof of Theorem~\ref{l:conv}} \label{ap:E}
\paragraph{(a):}
Recall that $L^{-0}=[(L^1)^t,\dots,(L^{p-1})^t]^t$ where $L^i$ is the vector of $i$th subdiagonal. To
 make a change of variables in terms of  difference of successive subdiagonal terms, define
 $\theta =[(\theta^1)^t,\dots,(\theta^{p-1})^t]^t$, where $\theta^j_1=L^{j}_1, \theta^j_i = L^{j}_i - L^{j}_{i-1},\;\mbox{for each}\;\;1\leq j \leq p-1,\; i =2, \dots,p-j $. Then, we have $L^{-0}= A \theta$ where $A \in R^{\binom{p}{2} \times \binom{p}{2}} $ is a block diagonal matrix
  where the $i$th $(1 \leq i \leq {p-1})$ block is a $(p-i) \times (p-i)$ lower triangular matrix with
   ones as the nonzero entries. Substituting for $L^{-0}$ in $Q(L)$, we get
  \begin{equation}
 Q(L)=(L^0)^tB_{00}L^0 + 2(L^0)^t B_{0\,-0}A \theta + \theta^t A^t B_{-0 \,-0}A \theta - 2 \sum_{i=1}^p \log L^0_i + \lambda \sum_{j =1}^{p-1}\sum_{i=2}^{p-j}|\theta^j_i|,
 \end{equation}
 where $B_{-0-0}$ is the submatrix that selects the rows and columns of $B$ with indices in $\{I_{-0}, I_{-0} \}$.
Next, we rewrite
\begin{equation} \label{eq:trsc}
Q(L) = x^tMx - \sum_{i =1}^{p}\log x_i + \sum_{j \in C}|x_i|,
\end{equation}
 where $x =[L^0,\theta]^t$ , $M= \tilde A^t \tilde B \tilde A$ ,
\[\tilde A =\begin{bmatrix} I & \mathbf{0}\\
						\mathbf{0}^t&A\end{bmatrix},\tilde B= \begin{bmatrix}
B_{00}&B_{0\, -0}\\
(B_{0\, -0})^t& B_{-0 \, -0}\\
\end{bmatrix},\]
and the set $C=\{i|x_i = \theta^j_k,\;1 \leq j \leq p-1,\; 2 \leq k \leq p-j\}$ corresponds to the indices
of the difference terms in $\theta$.

The matrix $\tilde B$ is positive semi-definite, since it is a submatrix of the
positive semi-definite matrix $B$ obtained by selecting specific rows and columns . Therefore, from
 \citet[Observation 7.1.8]{Horn2012} the matrix $M$ is positive semi-definite
            and can be written as $M=E^tE$ \citep[Chapter 7]{Horn2012} which establishes  the
            equivalency of $Q(L)$ and ($\ref{eq:tc}$). Since the diagonal elements of the
            sample covariance matrix is assumed to be positive, then $B$ and hence $E$ do not have 0 columns.

We note that (\ref{eq:trsc}) is not a fully form of  (\ref{eq:f2}), since the $\ell_1$ penalty reformulation involves $p-i - 1$ of the $p-i$ components in each subdiagonal $L^i$. However, with the transformation similar to \citet[Lemma 2.4]{rojas2014} easily formulate (\ref{eq:trsc}) as (\ref{eq:f2}).



\paragraph{(b):} 
We show  the convergence of iterates produced by Algorithm~\ref{a:SSC} to a global minimum by invoking  \citet[Theorem 2.2]{khare2014}.

From Part (a) of the Theorem, there exist matrix $E$ with no 0 columns such that (\ref{eq:tc} holds and Lemma~\ref{lemma-np}  shows an existence of an uniform lower bound for $Q(L)$. Thus, to show convergence,  it suffices  to show that the assumption (A5)* \citet[page 6]{khare2014} is satisfied or the level set of $Q(L)$, $\{L|Q(L) \leq Q(L^{(0)})\}$ is bounded. The latter property follows from the coercive property of the $Q(L)$ established in Lemma~\ref{lemma-np}, since the level sets of coercive function are bounded \citep{bertsekas2016}.

\section{Proof of  Lemma \ref{lemma-np}}
 In objective function $Q(L)$, $L \in \mathcal{L}_p$ and the eigenvalues of a lower triangular matrix are its diagonal elements, then
 from the well-known inequality $\log x \leq x-1,\; x>0$ it follows that $$\sum_{j=1}^p \log L^0_j \leq \sum_{j=1}^p (L^0_j -1) \leq (L^0)^t \mathbf{1}_p.$$
 Thus
\[
\begin{aligned}
Q(L) & \geq (L^0)^t B_{00}L^0 + 2 (L^0)^t B_{0\, -0}L^{-0} + (L^{-0})^t B_{-0\, -0} L^{-0} - 2 (L^0)^t \mathbf{1}_p \\
& \eqtext{(*)} \|B^{1/2}_{-0\, -0} L^{-0} + B_{-0\, -0}^{-1/2}B_{-0\, 0} L^0\|^2_2 +(L^0)^t(B_{00}-B_{0 \,-0}B^{-1}_{-0\, -0}B_{-0\, 0})L^0- 2(L^0)^t \mathbf{1}_p\\
&\geqtext{(**)} \|B^{1/2}_{-0\, -0} L^{-0} + B_{-0\, -0}^{-1/2}B_{-0\, 0} L^0\|^2_2 + \|K^{1/2}L^0 - K^{-1/2} \mathbf{1}_p\|^2_2 - \mathbf{1}^t_pK \mathbf{1}_p\\
& \geqtext{(***)} (\|B^{1/2}_{\eta \eta} L^\eta\| -\| -B_{\eta \eta}^{-1/2}B_{\eta 0} L^0\|)^2 +(\|K^{1/2}L^0 \|- \|K^{-1/2} \mathbf{1}_p\|)^2 - \mathbf{1}^t_pK \mathbf{1}_p
 \geq - \mathbf{1}^t_pK \mathbf{1}_p > - \infty  ,
\end{aligned}
\]
where the equality in (*) follows from completing the square by adding and subtracting $\| B^{-1/2}_{-0\, -0}B_{-0\,0}L^0\|^2_2$  and writing
$ (L^0)^t B_{0\, -0}L^{-0}=(L^0)^t B_{0\, -0}B^{-1/2}_{-0\, -0}B^{1/2}_{-0\, -0}L^{-0}$.
The inequality in (**) follows by completing the middle term as square and noting that $K=B_{00}-B_{0\, -0}B^{-1}_{-0\, -0}B_{-0\, 0}$ is
 positive semi-definite (the Schur complement of the positive semi-definite matrix $\tilde B$) and (***) is based on the triangle inequality $\|x\| - \|y\| \leq \|x-y\|$.
 
It follows from (**) and (***) that $Q(L) \rightarrow \infty$ as any subdiagonal $\|L^j\| \rightarrow \infty$ , and that if any diagonal element $L^0_j =0$ then $Q(L) \rightarrow \infty$. Therefore, any global minimum of $Q(L)$ has a strictly positive values for $L^0$ and hence any global minimum of $Q(L)$ over the open set $\mathcal{L}_p$ lies in $\mathcal{L}_p$. Here, $\mathcal{L}_p$ is open in the set of all lower triangular matrices.
Moreover, from the discussion above the function $Q(L)$ is coercive, i.e. if $\|[L^0,L^{-0}]\| \rightarrow \infty$, then $Q(L) \rightarrow \infty$.

\section{Convergence of $\ell_1$-trend filtering and HP} \label{ap:E11}
The convergence proof of trend filtering follows the same steps as described in previous section. For this case, the change of variables occurs by taking $A \in R^{\binom{p}{2} \times \binom{p}{2}} $ as a block diagonal matrix
  where the $i$th $(1 \leq i \leq {p-1})$ block is a $(p-i) \times (p-i)$ lower triangular matrix with
   the sequence $1,\dots, p -j$ as a nonzero elements in $j$th column \citep[Section 3.2]{kim2009}. The rest of the proof is similar to Appendix~\ref{ap:E}, thus omitted.

 For the convergence of HP, we note that for this case $Q(L)$ is convex differentiable function and the existing literature can be used to show convergence. For example see \citet{Luo1992}.

\section{Proof of Lemma~\ref{l:cost}} \label{ap:D}
We use ideas similar to the \citet{friedman2010a, cai2011, khare2016}. We start by considering two cases
\paragraph{Case 1 ($n \geq p$)}
Each iteration of SC Algorithm sweeps over diagonal and subdiagonal elements. Thus, update of the diagonal consists of estimating $y$ and then computing diagonal using Lemma~\ref{l:step7}. From the discussion provided before the Theorem~\ref{l:l1}, recall that matrix $B_{ii}$ is diagonal and $B_{ij}$ has $p-j+1$ nonzero elements located in separate columns, for $0 \leq i,j \leq p,\; i \neq j$.
 Thus the complexity of computing $y_0$ in SC algorithm is
\[\sum_{j=1}^{p-1}(p-j)=p(p-1)-\frac{p(p-1)}{2} \approx O(p^2)\]
From the Lemma~\ref{l:step7}, the computational cost of estimating diagonal is $O(p)$. Therefore the cost of diagonal update can be done in $p(p+1)/2$ steps.

 The update of each subdiagonal consist of computing $y_i,\; 1 \leq i \leq p-1$  and estimating the subdiagonal in SC algorithm. Thus, the cost of estimating $y_i$ is
 \[p + \sum_{j=0}^{i-1}(p-j) + \sum_{j=i+1}^{p-1}(p-j)=\frac{p(p-1)}{2} -p - i^2\]
 and since each iteration sweeps over $p-1$ subdiagonals we have
 \[\sum_{i=1}^{p-1}\frac{p(p-1)}{2} -p - i^2 \approx O(p^3).\]

\paragraph{Case 2 ($n < p$)} We use similar technique as in \citet[Lemma C.1]{khare2016}. Note that, since $S = Y Y^t /n $, where $Y \in R^{p \times n}$ matrix, then $B = S \otimes I_p = (Y \otimes I_p)(Y \otimes I_p)^t/n = AA^t $, where $A = (Y \otimes I_p)/ \sqrt{n} $. Moreover $B_{jk} = A_{j\cdot}A^t_{\cdot k}$, where $A_{j\cdot}$ is submatrix whose rows were selected from index $I_j$. Recall $V = vec(L)$  and let $r(V) = A^tV \in R^{pn}$, which takes $O(np^2)$ iterations, due to sparsity structure of $A$. Given initial value $V^{0}$, we evaluate $r(V^{0}) = A^t V^{0}$ and keep truck of $A^tV^{\mbox{current}}$.
If $V$ and $\tilde V$ differ only in one block coordinate $k$, then
\begin{equation} \label{eq:comp2}
 (A^tV)_j = \sum_{j=0}^{p-1}A_{\cdot j} \tilde L^j = \sum_{j=0}^{p-1}A_{\cdot j} L^j + A_{\cdot k}(\tilde L^k - L^k),
 \end{equation}
for $1 \leq j \leq np$. Therefore it takes $O(np)$ computations to update $A^tV$ to $A^t\tilde V$. Hence, after each block update in SC algorithm, it will take $O(np)$ computations to update $r$ to its current value.
Thus, the computation of $y_i$ can be transformed into
\begin{equation} \label{eq:comp1}
\sum_{j \neq i} B_{ij}(L^j) = \sum_{j =1}^p B_{ij}L^j - B_{ii}L^i = A_{i\cdot}\sum_{j=1}^pA^t_{\cdot j}L^j - B_{ii}L^i,
\end{equation}
for $0 \leq i \leq p-1$. It follows the update of k'th block in (\ref{eq:comp2}), consequently in (\ref{eq:comp1}) takes $O(np)$ computations. Hence one iteration will take $O(np^2)$ computations.

\section{Proof of Lemma \ref{l:TL}}  \label{ap:E1}
\paragraph{(a):}
From (\ref{eq:e1}), for any two elements in $i$th subdiagonal $u,v$
\[|L^i(u) - L^i(v)| =  |\frac{T^{i}(u)}{\sigma(u)} - \frac{T^{(i)}(v)}{\sigma(v)}| = |\frac{T^{i}(u) - T^{i}(v)}{\sigma(v)} + \frac{T^{i}(u)\Delta_{vu}(\sigma)}{\sigma(u)\sigma(v)}|, \]
where $\Delta_{vu}(\sigma) = \sigma(v) - \sigma(u)$. The simple algebra shows that
\begin{equation} \label{eq:TL}
 |L^i(u) - L^i(v)| \leq \frac{1}{c}|\Delta_{uv}(T^i)| + \frac{1}{c^2}|T^i(u)||\Delta_{vu}(\sigma)|
\end{equation}
\paragraph{(b):}The bounded total variation of $L^i$ follows from the fact that it is product of two functions of bounded total variation \citep[Theorem 2.4]{Grady2009} and (\ref{eq:tv}) follows from summing (\ref{eq:TL}) over $u,v \in [0,1]$
\[ \begin{aligned}
|L^i(u) - L^i(v)| &=  |\frac{T^{i}(u)}{\sigma(u)} - \frac{T^{(i)}(v)}{\sigma(v)}|
 =|\frac{T^{i}(u)}{\sigma(u)} - \frac{T^{i}(u)}{\sigma(v)} +\frac{T^{i}(u)}{\sigma(v)} -   \frac{T^{(i)}(v)}{\sigma(v)}|\\
 & \leq \frac{|T^i(u)||\sigma(v) - \sigma(u)|}{\sigma(u)\sigma(v)} + \frac{|T^i(v) - T^i(u)|}{\sigma(v)}
\end{aligned}
\]


 \section{Proof of Proposition~\ref{p:p1}} \label{ap:prop}
We say that the matrix $A \in R^{p \times p}$ belongs to the class $\mbox{TV}(R^{p \times p})$ if its diagonal and subdiagonals are functions of bounded variation. The following notation introduced in  \citet[Chapter 1.2.8]{golub1996} simplifies the discussion of the proof. For $L \in R^{p\times p}$ we introduce the matrix $D(L,i) \in R^{p \times p}$, which has the same $i$th sub(sup)diagonal as $L$ and 0 elsewhere.
Clearly, if $A \in TV(R^{p \times p})$ then $D(A,i) \in TV(R^{p \times p}),\; 0 \leq i \leq p -1$. For the lower triangular matrix $L$ we have
\[L = \begin{sbmatrix} {D(L,0)} L_{11} & 0 & \hdots & 0 \\
                                0        & L_{22} & \vdots & \vdots \\
				\vdots & \vdots & \ddots& 0\\
				0       & \hdots   & 0 &   L_{pp}
	\end{sbmatrix}  + \begin{sbmatrix} {D(L,1)} 0 & \hdots & \hdots & 0 \\
				L_{21} & 0 &\vdots& \vdots\\
				\vdots      & \ddots   &    \vdots & 0 \\
				0              &   0        & L_{p, p-1} & 0
	\end{sbmatrix}  + \dots + \begin{sbmatrix} {D(L,p-1)} 0 & \hdots & \hdots & 0 \\
				\vdots & \vdots &\vdots& \vdots\\
				\vdots      & \vdots   &    \vdots & \vdots \\
				L_{p1}              &   0        & \hdots & 0
	\end{sbmatrix}\]
and
\[\Sigma = L^tL = ( D(L,p-1)+\dots + D(L,0))^t( D(L,p-1)+\dots + D(L,0)).\]
 From the structure of $D(L,i)$'s it can be seen that the $i$th subdiagonal of $\Sigma$ can be written as the sum of the $i$th subdiagonals of the following matrix products
\begin{equation} \label{eq:sigL}
\Sigma^i = \sum_{j=0}^{p-i-1}((D(L,j)^tD(L,j+i))^i,
\end{equation}
where from the position of degenerate values, the matrix product $D(L,j)^tD(L,j+i)$ has nonzero values on the $i$th subdiagonal and zero elsewhere. Moreover, nonzero values in the $i$th subdiagonal of $(D(L,j)^tD(L,j+i))^i = (L^i_{(p-j-i-1):(p -1)})^t L^{j + i}$.  Now, since the product of two functions of total bounded variation are of bounded variation and after adding and subtracting corresponding terms as in the proof of Lemma~\ref{l:TL}, we get
\[TV(((D(L,j)^tD(L,j+i))^i) \leq m_jK_{j+i} + m_{j+i}K_j\]
and the result follows from (\ref{eq:sigL}).

\paragraph{(b)}
We show the converse of the part (a), i.e. if $\Sigma \in \mbox{TV}(R^{p \times p})$ then  there exist a unique $L \in \mbox{TV}(T^{p \times p})$ and $\Sigma = L^t L$. The proof uses similar argument proposed in \citep[Lemma 2.8]{chern2000}.
Before introducing the main argument, we state the following lemma, which  will be used in the proof.

\begin{lemma} \label{l:tvl}
 If lower triangular matrices $G, M \in \mbox{TV}(R^{p \times p})$ then $A = GM \in \mbox{TV}(R^{p \times p})$
\end{lemma}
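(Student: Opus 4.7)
The plan is to expand the $i$th subdiagonal of the product $A = GM$ as a finite sum of pointwise products of (possibly shifted) subdiagonals of $G$ and $M$, and then invoke standard closure properties of the class of functions of bounded total variation on $[0,1]$ (closure under finite sums, under products of bounded functions, and under translation of the argument).

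First I would use the lower triangular structure of both $G$ and $M$ to collapse the product $A_{k+i,k} = \sum_{\ell=1}^{p} G_{k+i,\ell} M_{\ell,k}$ to the finite range $\ell \in \{k,\dots,k+i\}$. Setting $\ell = k+s$, this gives, for each $i \in \{0,\dots,p-1\}$ and each column $k \in \{1,\dots,p-i\}$,
\begin{equation*}
A_{k+i,k} \;=\; \sum_{s=0}^{i} G_{k+i,\,k+s}\, M_{k+s,\,k}.
\end{equation*}
The factor $G_{k+i,k+s}$ is the $(i-s)$th subdiagonal entry of $G$ evaluated at column $k+s$, while $M_{k+s,k}$ is the $s$th subdiagonal entry of $M$ evaluated at column $k$. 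Rewriting in the rescaled time notation $u = k/N$ of Section~\ref{s:TandL}, this reads
\begin{equation*}
A^{i}(u) \;=\; \sum_{s=0}^{i} G^{i-s}\!\left(u + \tfrac{s}{N}\right)\cdot M^{s}(u).
\end{equation*}

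Second, I would invoke three elementary facts about BV functions on $[0,1]$: (i) a BV function is bounded, and products of BV functions are BV with $TV(fg) \le \|f\|_\infty TV(g) + \|g\|_\infty TV(f)$; (ii) translating the argument by a constant (and restricting to a subinterval of $[0,1]$, which can only shrink the supremum defining TV) preserves the BV property and leaves the total variation unchanged; (iii) finite sums of BV functions are BV with $TV(f+g)\le TV(f)+TV(g)$. Since $G,M \in \mbox{TV}(R^{p\times p})$ by hypothesis, every $G^{i-s}$ and $M^{s}$ is BV, so each summand in the display above is BV by (i) and (ii), and the full sum is BV by (iii). As $i$ was arbitrary, every (sub)diagonal of $A = GM$ has bounded total variation, which is precisely the statement that $A \in \mbox{TV}(R^{p\times p})$.

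The step I expect to be essentially the only source of friction is the bookkeeping for the shift $s/N$ inside $G^{i-s}(u + s/N)$: one must verify that evaluating $G^{i-s}$ on a translated subgrid of $[0,1]$ still yields a function of bounded variation with TV bounded by $TV(G^{i-s})$. This is immediate from the definition of TV as a supremum over partitions of $[0,1]$ restricted to grid points, since any partition used in the shifted setting is a partition of a subinterval of $[0,1]$. Everything else is routine. For completeness one could also record the quantitative bound
\begin{equation*}
TV(A^{i}) \;\le\; \sum_{s=0}^{i}\Bigl(\|G^{i-s}\|_{\infty} TV(M^{s}) + \|M^{s}\|_{\infty} TV(G^{i-s})\Bigr),
\end{equation*}
which follows by combining (i)--(iii) and which is in the same spirit as the bound used in Proposition~\ref{p:p1}(a).
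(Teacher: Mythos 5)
Your proof is correct and follows essentially the same route as the paper: your entrywise identity $A_{k+i,k}=\sum_{s=0}^{i}G_{k+i,\,k+s}M_{k+s,\,k}$ is exactly the paper's decomposition $A^j=\sum_{i=0}^{j}\bigl(D(G,i)D(M,j-i)\bigr)^j$ written out in coordinates, after which both arguments conclude via closure of bounded-variation functions under finite sums and products. If anything, you are more careful than the paper, which leaves the grid shift in the first factor (your $G^{i-s}(u+s/N)$) implicit rather than verifying that it preserves the BV property.
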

\begin{proof}
Using the matrix notation $(D(L,\cdots)$ introduced in part(a), it can be shown that the $i$th subdiagonal of the matrix product $A = GM$
can be written as
\[A^j = (GM)^j =\sum_{i = 0}^{j} (D(G,i)D(M, j - i))^j \]
and the result follows by recalling that the product of the functions of bounded variation is of bounded variation.
\end{proof}

The main argument consist in writing $\Sigma = \begin{bmatrix}  \hat \Sigma & b \\
							b^t  & \sigma^2_{pp} \end{bmatrix} $
and let $G_1 = \begin{bmatrix}  I_{p-1}  & b/\sigma_{pp} \\
						0 & \sigma_{pp}\end{bmatrix}$.
From the construction of $G_1$ and $\Sigma \in \mbox{TV}(R^{p \times p})$, it is easy to see that $G_1 \in \mbox{TV}(R^{p \times p})$ and $G^{-1}_1 \Sigma G^{-t}_1 = \begin{bmatrix} \Sigma_1 & 0 \\ 0 & 1 \end{bmatrix}$, where $\Sigma_1 = \hat \Sigma - bb^t/ \sigma^2_{pp}$. Clearly, $\Sigma_1 \in TV(R^{p-1\times p-1})$ since $\hat \Sigma \in TV(R^{p-1 \times p-1})$ by construction and $TV(\Sigma^i_1) = TV(\hat \Sigma^i -(bb^t)^i/\sigma^2_{pp}) \leq TV(\hat \Sigma^i) < \infty $. By repeating this procedure and using Lemma~\ref{l:tvl},  result follows.

 \section{Selection of Tuning Parameter} \label{s:tun}
We use BIC-like measure and cross-validation to choose the tuning parameter $\lambda$.
In particular,  the tuning parameter $\lambda$ is determined by choosing the minimum of BIC-like measure and CV over the grid. BIC is defined as:
 \[BIC(\lambda)=ntr(\hat L^{t}\hat LS)-n \log|\hat L^{t}\hat L|+\log n\times E ,\]
 where $E$ denoted the degrees of freedom, $n$ and $S$ are the sample size and covariance matrix, respectively. For example for the sparse fused lasso, $E$ corresponds to number of nonzero fused groups in $\hat L$ \citep{tibshirani2011}.

 For $K-$fold cross-validation, we randomly split the full dataset $\mathcal{D}$ into $K$ subsets of about the same size, denoted by $\mathcal{D}^{\nu},\; \nu=1,\dots,K$. For each $\nu$, $\mathcal{D}-\mathcal{D}^{\nu}$ is used to estimate the parameters and $\mathcal{D}^{\nu}$ to validate. The performance of the model is measured using the log-likelihood. We choose the tuning parameter $\lambda$ as a minimum of the $K-$fold cross-validated log-likelihood criterion over the grid.
 \[CV(\lambda)=\frac{1}{K}\sum_{\nu=1}^{K}\Big ( d_{\nu} \log |(\hat {L}^t_{-{\nu}} \hat {L}_{-{\nu}})^{-1}|+\sum_{I_{\nu}} y^t_i\hat {L}^t_{-{\nu}} \hat {L}_{-{\nu}}y_i \Big ),\]
 where $\hat {L}_{-{\nu}}$  is the estimated Cholesky factor using the data set $\mathcal{D}- \mathcal{D}^{\nu}$, $I_{\nu}$ is the index set of the data in $\mathcal{D}$, $d_{\nu}$ is the size of $I_{\nu}$, and $y_i$ is the $i$th observation of the dataset $\mathcal{D}$ .

\section{Additional Simulation} \label{s:adsim}
In this section we provide additional simulation results. Two different cases are considered. In the first case, matrix $T$ is full lower triangular matrix and subdiagonals are randomly chosen from the Cases (A-D) described in the Section~\ref{s:s3}. In the second case, matrix $T$ follows nonhierarchical structure, in a sense described in \citet{yu2017}. That is, in a full lower triangular matrix $T$, we enforce first and last $p/3$ subdiagonals admit nonzero values, drawn from uniform [0.1, 0.2] and positive/negative signs are then assigned with probability 0.5. The rest of $p/3$ subdiagonals admit zero value. See Figure ~\ref{fig:nhsnap} for an illustration. For the latter case, Sparse SC have been used for the estimation. That is we use two tuning parameters $\lambda_1$ to control sparsity and $\lambda_2$ smoothness, respectively.


For both cases, we consider settings when $p= 50, 150$ and $n = 100$, however because of the space limitation only $p = 150$ is reported. Each possible setting is repeated over 20 simulated datasets. The tuning parameters were chosen using cross-validation. Moreover, for the second case we compare the results from sparse SC (HP, Fused, Trend) estimator with CSCS and HSC using a receiver operating characteristic curve, or ROC curve.

We start by providing results for the first case. The Figure~\ref{fig:mp} plots the first four estimated subdiagonals of the full lower traingular matrix $T$,  using SC estimator. From the figure, the first two subdiagonals correspond to the Case B, the third to the Case D and the fourth to the Case C, respectively. Visually, the  SC-Fused captures the step function the best for the first subdiagonal. However, all three estimators failed to capture the stepwise linear structure of the second subdiagonal, but there is a significant improvement of SC estimator to capture the wiggliness of the Markov process in the third subdiagonal (SC-HP being the best) and smooth, slow time-varying structure of the fourth subdiagonal (SC-Trend being the best). Next we report the performance of three estimators using Frobenius and Infinity norm. Figure~\ref{fig:normplot} plots the results. Overall, for matrix $T$, SC-Trend filtering provides the lowest Frobenius and Infinity norm followed by SC-Fused.

\begin{figure}[H]
\centering
\includegraphics[width=13cm,height=9.5cm]{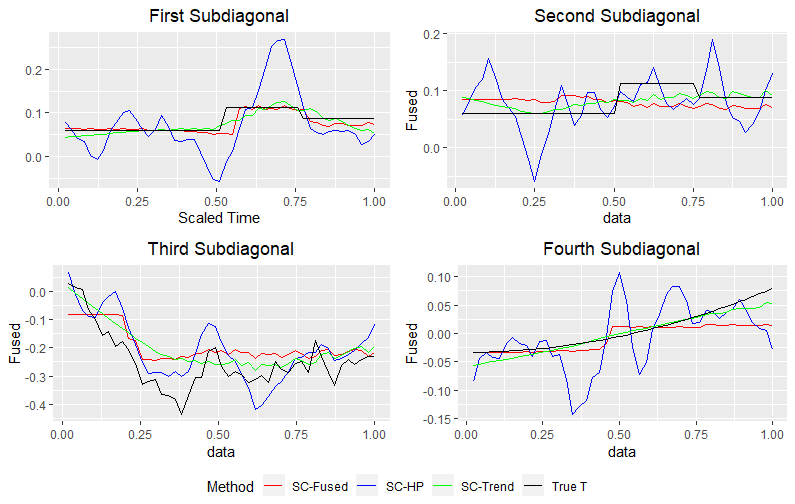}
\caption{ Estimated first four subdiagonals ( $p=150$).}
\label{fig:mp}
\end{figure}

\begin{figure}[H]
\centering
\includegraphics[width=11cm,height=6cm]{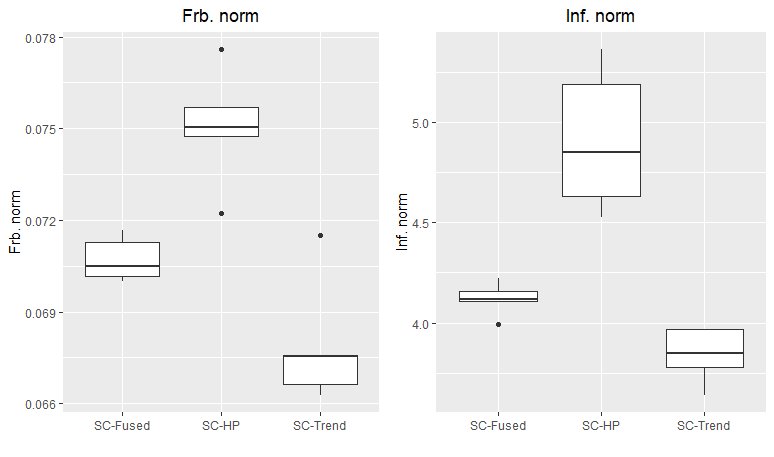}
\caption{ ROC curve for $p=150$.}
\label{fig:normplot}
\end{figure}

 \begin{remark} Relying on the result above, one can learn the lower triangular matrix $L$($T$) by considering the penalty form as an additional parameter to tune for each subdiagonal.
 \end{remark}

Now, we compare the performance of the SC with the CSCS and HSC on the support recovery, when the structure is non-hierarchical. Comparison is implemented using ROC curves. The ROC curve is created by plotting the true positive rate (TPR) against the false positive rate (FPR) at various penalty parameter settings \citep{friedman2010a}. Here, the ROC curve is obtained by varying around 60 possible values for the penalty parameter $\lambda_1$. For the SC-Fused, Trend and HP, the smoothing tuning parameter $\lambda_2$ is obtained from the cross-validation by fixing $\lambda_1$ in a given value. In applications, FPR is usually controlled to be sufficiently small, thus following \citet{khare2016}, the focus is on comparing portion of ROC curves for which FPR is less than 0.15. The comparison of ROC curves is implemented using Area-under-the-curve (AUC) \citep{friedman2010a}.

Table~\ref{tb:auc} reports the mean and the standard deviation (over 20 simulations) for the AUCs for SC (HP, Fused and Trend), CSCS and HSC when $p = 150$ and $n = 100$. The best  result is given in bold.

\begin{table}[!htbp] \centering
  \caption{Mean and Standard Deviation of area-under-the-curve (AUC) for 20 simulations for p = 150. }
  \label{tb:auc}
\begin{tabular}{@{\extracolsep{5pt}} lcc}
\\[-1.8ex]\hline
Method& Mean & Std. Dev \\
\hline \\[-1.8ex]
SC-HP&$0.068$ & $0.019$ \\
SC- Fused &$0.121$ & $0.023$ \\
SC- Trend &$0.104$ & $0.015$ \\
CSCS&$0.058$ & $0.007$ \\
HSC&$\mathbf{0.137}$ & $0.025$ \\
\hline \\[-1.8ex]
\end{tabular}
\end{table}

From the table above, it can be seen that HSC provides the best result. However, Figure~\ref{fig:nhsnap}, which captures snapshot of the graphical comparison of the estimated matrix $L$ for the five estimators, sheds more lights into characteristics of estimators. As can be seen, even though HSC provides the highest AUC for FPR less than 0.15, it fails to capture the zero gap between subdiagonals of matrix $L$ compare, for example, with SC-Fused, which provides the second best result in the Table~\ref{tb:auc}.


\begin{figure}[H]
\centering
\includegraphics[width=9.5cm,height=8cm]{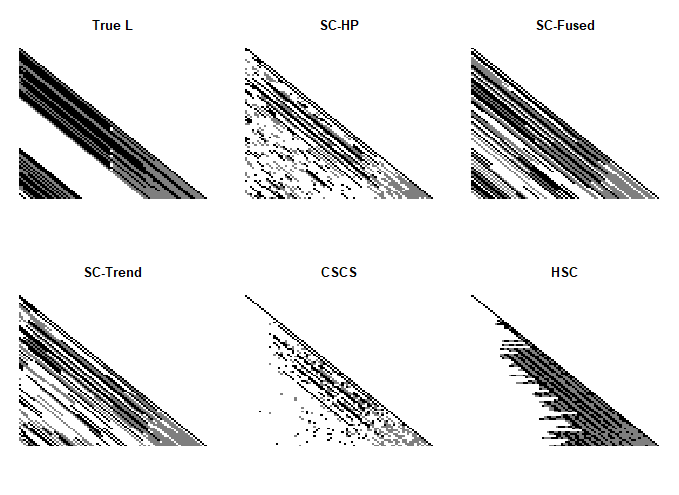}
\caption{ Comparison of snapshots for the simulated example for $p = 150$.}
\label{fig:nhsnap}
\end{figure}

\section{Cattle data: Additional Analysis} \label{s:cattlead}

Figure~\ref{fig:cfullplot} provides the plot of the first two subdiagonals using eight estimators descirbed in Section~\ref{s:cattle}.

\begin{figure}[H]
\centering
\includegraphics[width=14cm,height=6cm]{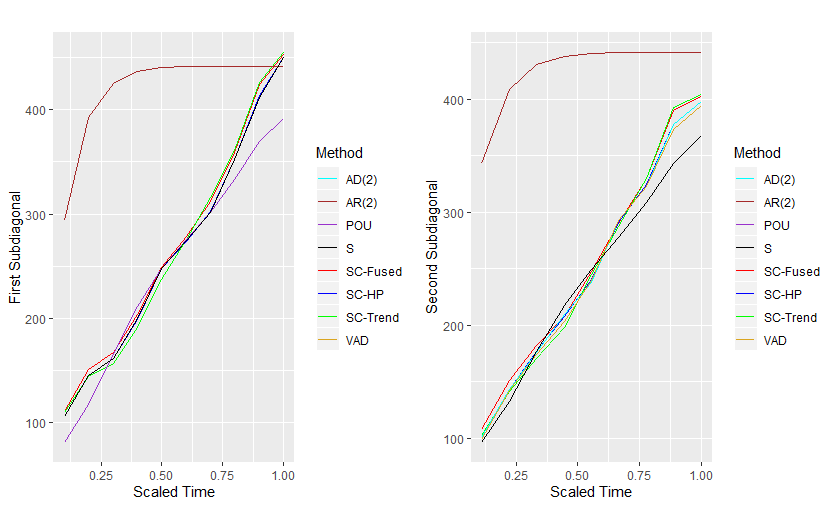}
\caption{Plots of estimated first and second subdiagonals of the covariance matrix for various estimation methods.}
\label{fig:cfullplot}
\end{figure}

\clearpage
\bibliography{chol_bib}

\begin{thebibliography}{54}
\newcommand{\enquote}[1]{``#1''}
\providecommand{\natexlab}[1]{#1}
\providecommand{\url}[1]{\texttt{#1}}
\providecommand{\urlprefix}{URL }
\providecommand{\bibAnnoteFile}[1]{%
  \IfFileExists{#1}{\begin{quotation}\noindent\textsc{Key:} #1\\
  \textsc{Annotation:}\ \input{#1}\end{quotation}}{}}
\providecommand{\bibAnnote}[2]{%
  \begin{quotation}\noindent\textsc{Key:} #1\\
  \textsc{Annotation:}\ #2\end{quotation}}

\bibitem[{Adak(1998)}]{adak1998}
Adak, Sudeshna (1998), \enquote{Time-dependent spectral analysis of
  nonstationary time series.} \emph{Journal of the American Statistical
  Association}, 93, 1488--1501.
\bibAnnoteFile{adak1998}

\bibitem[{Ansley(1979)}]{ansley1979}
Ansley, Craig~F. (1979), \enquote{An algorithm for the exact likelihood of a
  mixed autoregressive-moving average process.} \emph{Biometrika}, 66, 59--65.
\bibAnnoteFile{ansley1979}

\bibitem[{Banerjee et~al.(2008)Banerjee, El~Ghaoui, and
  d'Aspremont}]{Banerjee2007}
Banerjee, Onureena, Laurent El~Ghaoui, and Alexandre d'Aspremont (2008),
  \enquote{Model selection through sparse maximum likelihood estimation for
  multivariate gaussian or binary data.} \emph{J. Mach. Learn. Res.}, 9,
  485--516.
\bibAnnoteFile{Banerjee2007}

\bibitem[{Bertsekas(2016)}]{bertsekas2016}
Bertsekas, D.P. (2016), \emph{Nonlinear Programming}. Athena Scientific.
\bibAnnoteFile{bertsekas2016}

\bibitem[{Bickel and Gel(2011)}]{bickel2011}
Bickel, Peter~J. and Yulia~R. Gel (2011), \enquote{Banded regularization of
  autocovariance matrices in application to parameter estimation and
  forecasting of time series.} \emph{Journal of the Royal Statistical Society:
  Series B (Statistical Methodology)}, 73, 711--728.
\bibAnnoteFile{bickel2011}

\bibitem[{Blake(2018)}]{blake2018}
Blake, Tayler (2018), \emph{Nonparametric Covariance Estimation with Shrinkage
  toward Stationary Models}. Ph.D. thesis, The Ohio State University.
\bibAnnoteFile{blake2018}

\bibitem[{Boyd and Vandenberghe(2004)}]{Boyd2004}
Boyd, Stephen and Lieven Vandenberghe (2004), \emph{Convex Optimization}.
  Cambridge University Press, New York, NY, USA.
\bibAnnoteFile{Boyd2004}

\bibitem[{Cai et~al.(2011)Cai, Liu, and Luo}]{cai2011}
Cai, Tony, Weidong Liu, and Xi~Luo (2011), \enquote{A constrained l1
  minimization approach to sparse precision matrix estimation.} \emph{Journal
  of the American Statistical Association}, 106, 594--607. Available at \url{
  https://doi.org/10.1198/jasa.2011.tm10155}.
\bibAnnoteFile{cai2011}

\bibitem[{Chern and Dieci(2000)}]{chern2000}
Chern, Jann-Long and Luca Dieci (2000), \enquote{Smoothness and periodicity of
  some matrix decompositions.} \emph{SIAM J. Matrix Analysis Applications}, 22,
  772--792.
\bibAnnoteFile{chern2000}

\bibitem[{Dahlhaus(1997)}]{dahlhaus1997}
Dahlhaus, R. (1997), \enquote{Fitting time series models to nonstationary
  processes.} \emph{Ann. Statist.}, 25, 1--37. Available at
  \url{https://doi.org/10.1214/aos/1034276620}.
\bibAnnoteFile{dahlhaus1997}

\bibitem[{Dahlhaus(2012)}]{dahlhaus2012}
Dahlhaus, Rainer (2012), \enquote{Locally stationary processes.} \emph{Handbook
  of Statistics}, 30, 351--413.
\bibAnnoteFile{dahlhaus2012}

\bibitem[{Dahlhaus and Polonik(2009)}]{dahlhaus2009}
Dahlhaus, Rainer and Wolfgang Polonik (2009), \enquote{Empirical spectral
  processes for locally stationary time series.} \emph{Bernoulli}, 15, 1--39.
\bibAnnoteFile{dahlhaus2009}

\bibitem[{Dai and Guo(2004)}]{dai2004}
Dai, Ming and Wensheng Guo (2004), \enquote{Multivariate spectral analysis
  using cholesky decomposition.} \emph{Biometrika}, 91, 629--643.
\bibAnnoteFile{dai2004}

\bibitem[{Dallakyan(2019)}]{aram2019}
Dallakyan, Aramayis (2019), \enquote{Sc package.}
  \url{https://github.com/adallak/SCPackage}.
\bibAnnoteFile{aram2019}

\bibitem[{Das and Politis(2020)}]{das2020}
Das, Srinjoy and Dimitris~N. Politis (2020), \enquote{Predictive inference for
  locally stationary time series with an application to climate data.}
  \emph{Journal of the American Statistical Association}, 0, 1--16.
\bibAnnoteFile{das2020}

\bibitem[{Davis et~al.(2006)Davis, Lee, and Gabriel}]{davis2006}
Davis, A~Richard, C.~M~Thomas Lee, and Rodriguez-Yam~A Gabriel (2006),
  \enquote{Structural break estimation for nonstationary time series models.}
  \emph{Journal of the American Statistical Association}, 101, 223--239.
\bibAnnoteFile{davis2006}

\bibitem[{Friedman et~al.(2010)Friedman, Hastie, and
  Tibshirani}]{friedman2010a}
Friedman, H.~Jerome, J.~Trevor Hastie, and Robert Tibshirani (2010),
  \enquote{Applications of the lasso and grouped lasso to the estimation of
  sparse graphical models.} Available at
  \url{http://statweb.stanford.edu/~tibs/ ftp/ggraph.pdf}.
\bibAnnoteFile{friedman2010a}

\bibitem[{Friedman et~al.(2008)Friedman, Hastie, and Tibshirani}]{Friedman2008}
Friedman, J, T~Hastie, and R.~Tibshirani (2008), \enquote{Sparse inverse
  covariance estimation with the graphical lasso.} \emph{Biostatistics}, 9,
  432--441.
\bibAnnoteFile{Friedman2008}

\bibitem[{Friedman et~al.(2007)Friedman, Hastie, Höfling, and
  Tibshirani}]{friedman2007}
Friedman, Jerome, Trevor Hastie, Holger Höfling, and Robert Tibshirani (2007),
  \enquote{Pathwise coordinate optimization.} \emph{Ann. Appl. Stat.}, 1,
  302--332.
\bibAnnoteFile{friedman2007}

\bibitem[{Gabriel(1962)}]{gabriel1962}
Gabriel, K.~R. (1962), \enquote{Ante-dependence analysis of an ordered set of
  variables.} \emph{Ann. Math. Statist.}, 33, 201--212.
\bibAnnoteFile{gabriel1962}

\bibitem[{Golub and Van~Loan(1996)}]{golub1996}
Golub, Gene~H. and Charles~F. Van~Loan (1996), \emph{Matrix Computations (3rd
  Ed.)}. Johns Hopkins University Press, Baltimore, MD, USA.
\bibAnnoteFile{golub1996}

\bibitem[{Grady(2009)}]{Grady2009}
Grady, Noella (2009), \enquote{Functions of bounded variation.} Available at
  \url{https://www.whitman.edu/Documents/Academics/Mathematics/grady.pdf}.
\bibAnnoteFile{Grady2009}

\bibitem[{Hodrick and Prescott(1997)}]{hodrick1997}
Hodrick, {Robert J.} and Edward Prescott (1997), \enquote{Postwar u.s business
  cycles: An empirical investigation.} \emph{Journal of Money, Credit and
  Banking}, 29.
\bibAnnoteFile{hodrick1997}

\bibitem[{Horn and Johnson(2012)}]{Horn2012}
Horn, Roger~A. and Charles~R. Johnson (2012), \emph{Matrix Analysis}, 2nd
  edition. Cambridge University Press, New York, NY, USA.
\bibAnnoteFile{Horn2012}

\bibitem[{Huang et~al.(2006)Huang, Liu, Pourahmadi, and Liu}]{Huang2006}
Huang, J, N~Liu, M~Pourahmadi, and L.~Liu (2006), \enquote{Covariance matrix
  selection and estimation via penalised normal likelihood.} \emph{Biometrika},
  93, 85--98.
\bibAnnoteFile{Huang2006}

\bibitem[{Huang et~al.(2007)Huang, Liu, and Liu}]{huang2007}
Huang, Z~Jianhua, Linxu Liu, and Naiping Liu (2007), \enquote{Estimation of
  large covariance matrices of longitudinal data with basis function
  approximations.} \emph{Journal of Computational and Graphical Statistics},
  16, 189--209.
\bibAnnoteFile{huang2007}

\bibitem[{Kenward(1987)}]{kenward1987}
Kenward, Michael~G. (1987), \enquote{A method for comparing profiles of
  repeated measurements.} \emph{Journal of the Royal Statistical Society.
  Series C (Applied Statistics)}, 36, 296--308.
\bibAnnoteFile{kenward1987}

\bibitem[{Khare et~al.(2019)Khare, Oh, Rahman, and Rajaratnam}]{khare2016}
Khare, Kshitij, Sang-Yun Oh, Syed Rahman, and Bala Rajaratnam (2019),
  \enquote{A scalable sparse cholesky based approach for learning
  high-dimensional covariance matrices in ordered data.} \emph{Machine
  Learning}, 108, 2061--2086.
\bibAnnoteFile{khare2016}

\bibitem[{Khare et~al.(2015)Khare, Oh, and Rajaratnam}]{khare2015}
Khare, Kshitij, Sang-Yun Oh, and Bala Rajaratnam (2015), \enquote{A convex
  pseudolikelihood framework for high dimensional partial correlation
  estimation with convergence guarantees.} \emph{Journal of the Royal
  Statistical Society: Series B (Statistical Methodology)}, 77, 803--825.
\bibAnnoteFile{khare2015}

\bibitem[{{Khare} and {Rajaratnam}(2014)}]{khare2014}
{Khare}, Kshitij and Bala {Rajaratnam} (2014), \enquote{{Convergence of cyclic
  coordinatewise l1 minimization}.} \emph{arXiv e-prints}. Available at
  \url{https://arxiv.org/pdf/1404.5100.pdf}.
\bibAnnoteFile{khare2014}

\bibitem[{Kim et~al.(2009)Kim, Koh, Boyd, and Gorinevsky}]{kim2009}
Kim, Seung-Jean, Kwangmoo Koh, Stephen~P Boyd, and Dimitry~M. Gorinevsky
  (2009), \enquote{l1 trend filtering.} \emph{SIAM Review}, 51, 339--360.
\bibAnnoteFile{kim2009}

\bibitem[{{Kitagawa} and {Gersch}(1985)}]{kitagawa1985}
{Kitagawa}, G. and W.~{Gersch} (1985), \enquote{A smoothness priors
  time-varying ar coefficient modeling of nonstationary covariance time
  series.} \emph{IEEE Transactions on Automatic Control}, 30, 48--56.
\bibAnnoteFile{kitagawa1985}

\bibitem[{Levina et~al.(2008)Levina, Rothman, and Zhu}]{levina2008}
Levina, Elizaveta, Adam Rothman, and Ji~Zhu (2008), \enquote{Sparse estimation
  of large covariance matrices via a nested lasso penalty.} 2, 245--263.
\bibAnnoteFile{levina2008}

\bibitem[{Luo and Tseng(1992)}]{Luo1992}
Luo, Z.~Q. and P.~Tseng (1992), \enquote{On the convergence of the coordinate
  descent method for convex differentiable minimization.} \emph{Journal of
  Optimization Theory and Applications}, 72, 7--35.
\bibAnnoteFile{Luo1992}

\bibitem[{McMurry and Politis(2010)}]{politis2010}
McMurry, Timothy~L. and Dimitris~N. Politis (2010), \enquote{Banded and tapered
  estimates for autocovariance matrices and the linear process bootstrap.}
  \emph{Journal of Time Series Analysis}, 31, 471--482.
\bibAnnoteFile{politis2010}

\bibitem[{McMurry and Politis(2015)}]{mcmurry2015}
McMurry, Timothy~L. and Dimitris~N. Politis (2015), \enquote{High-dimensional
  autocovariance matrices and optimal linear prediction.} \emph{Electron. J.
  Statist.}, 9, 753--788.
\bibAnnoteFile{mcmurry2015}

\bibitem[{Peng et~al.(2009)Peng, Wang, Zhou, and Zhu}]{peng2009}
Peng, Jie, Pei Wang, Nengfeng Zhou, and Ji~Zhu (2009), \enquote{Partial
  correlation estimation by joint sparse regression models.} \emph{Journal of
  the American Statistical Association}, 104, 735--746.
\bibAnnoteFile{peng2009}

\bibitem[{Pourahmadi(2001)}]{pourahmadi2001}
Pourahmadi, M. (2001), \emph{Foundations of time series analysis and prediction
  theory}. John Wiley \& Sons, Ltd.
\bibAnnoteFile{pourahmadi2001}

\bibitem[{Pourahmadi(1999)}]{pourahmadi1999}
Pourahmadi, Mohsen (1999), \enquote{Joint mean-covariance models with
  applications to longitudinal data: Unconstrained parameterisation.}
  \emph{Biometrika}, 86, 677--690.
\bibAnnoteFile{pourahmadi1999}

\bibitem[{Pourahmadi(2013)}]{pourahmadi2013}
Pourahmadi, Mohsen (2013), \emph{High-Dimensional Covariance Estimation}. John
  Wiley \& Sons, Ltd.
\bibAnnoteFile{pourahmadi2013}

\bibitem[{{R Core Team}(2019)}]{rcore}
{R Core Team} (2019), \emph{R: A Language and Environment for Statistical
  Computing}. R Foundation for Statistical Computing, Vienna, Austria,
  \urlprefix\url{http://www.R-project.org/}.
\bibAnnoteFile{rcore}

\bibitem[{Rao(1970)}]{rao1970}
Rao, T.~Subba (1970), \enquote{The fitting of non-stationary time-series models
  with time-dependent parameters.} \emph{Journal of the Royal Statistical
  Society. Series B (Methodological)}, 32, 312--322.
\bibAnnoteFile{rao1970}

\bibitem[{Rojas and Wahlberg(2014)}]{rojas2014}
Rojas, Cristian~R. and Bo~Wahlberg (2014), \enquote{On change point detection
  using the fused lasso method.}
\bibAnnoteFile{rojas2014}

\bibitem[{Rosen and Stoffer(2007)}]{rosen2007}
Rosen, Ori and David~S. Stoffer (2007), \enquote{{Automatic estimation of
  multivariate spectra via smoothing splines}.} \emph{Biometrika}, 94,
  335--345.
\bibAnnoteFile{rosen2007}

\bibitem[{Rothman et~al.(2010)Rothman, Levina, and Zhu}]{rothman2010}
Rothman, J.~Adam, Elizaveta Levina, and Ji~Zhu (2010), \enquote{A new approach
  to cholesky-based covariance regularization in high dimensions.}
  \emph{Biometrika}, 97, 539--550.
\bibAnnoteFile{rothman2010}

\bibitem[{Rudin et~al.(1992)Rudin, Osher, and Fatemi}]{rudin1992}
Rudin, Leonid~I., Stanley Osher, and Emad Fatemi (1992), \enquote{Nonlinear
  total variation based noise removal algorithms.} \emph{Physica D: Nonlinear
  Phenomena}, 60, 259 -- 268.
\bibAnnoteFile{rudin1992}

\bibitem[{Shojaie and Michailidis(2010)}]{shojaie2010}
Shojaie, Ali and George Michailidis (2010), \enquote{{Penalized likelihood
  methods for estimation of sparse high-dimensional directed acyclic graphs}.}
  \emph{Biometrika}, 97, 519--538.
\bibAnnoteFile{shojaie2010}

\bibitem[{Tibshirani et~al.(2005)Tibshirani, Saunders, Rosset, Zhu, and
  Knight}]{Tibshirani2005}
Tibshirani, Robert, Michael Saunders, Saharon Rosset, Ji~Zhu, and Keith Knight
  (2005), \enquote{Sparsity and smoothness via the fused lasso.} \emph{Journal
  of the Royal Statistical Society}, 67, 91--108.
\bibAnnoteFile{Tibshirani2005}

\bibitem[{Tibshirani and Taylor(2011)}]{tibshirani2011}
Tibshirani, Ryan~J. and Jonathan Taylor (2011), \enquote{The solution path of
  the generalized lasso.} \emph{Ann. Statist.}, 39, 1335--1371.
\bibAnnoteFile{tibshirani2011}

\bibitem[{Whittaker(1990)}]{whittaker1990}
Whittaker, J. (1990), \emph{Graphical models in applied multivariate
  statistics}. John Wiley \& Sons, Ltd.
\bibAnnoteFile{whittaker1990}

\bibitem[{Wu and Pourahmadi(2003)}]{Wu2003}
Wu, Wei~Biao and Mohsen Pourahmadi (2003), \enquote{Nonparametric estimation of
  large covariance matrices of longitudinal data.} \emph{Biometrika}, 90,
  831--844.
\bibAnnoteFile{Wu2003}

\bibitem[{Wu and Pourahmadi(2009)}]{wu2009}
Wu, Wei~Biao and Mohsen Pourahmadi (2009), \enquote{Banding sample
  autocovariance matrices of stationary processes.} \emph{Statistica Sinica},
  19, 1755--1768.
\bibAnnoteFile{wu2009}

\bibitem[{Yu and Bien(2017)}]{yu2017}
Yu, Guo and Jacob Bien (2017), \enquote{Learning local dependence in ordered
  data.} \emph{Journal of Machine Learning Research}, 18, 1--60.
\bibAnnoteFile{yu2017}

\bibitem[{Zimmerman and Nunez-Anton(2010)}]{zimmerman2010}
Zimmerman, Dale~L. and Vicente~A. Nunez-Anton (2010). Chapman \& Hall/CRC
  Monographs on Statistics \& Applied Probability, Taylor \& Francis, New York.
\bibAnnoteFile{zimmerman2010}

\end{thebibliography}
\bibliographystyle{te}

\end{document}